\titlespacing\section{0pt}{6pt plus 2pt minus 2pt}{6pt plus 2pt minus 2pt}
\titlespacing\subsection{0pt}{6pt plus 2pt minus 2pt}{6pt plus 2pt minus 2pt}
\titlespacing\subsubsection{0pt}{2pt plus 0pt minus 0pt}{2pt plus 0pt minus 0pt}
\titleformat{\section}{\large\bfseries\sffamily}{\thesection}{1em}{}
\titleformat{\subsection}{\normalsize\bfseries\sffamily}{\thesubsection}{1em}{}
\titleformat{\subsubsection}{\small\sffamily}{\thesubsubsection}{1em}{}
\newcommand{\colref}[2]{\hyperref[#2]{#1~\ref*{#2}}}
\newcommand{\coloredref}[2]{\hyperref[#2]{#1~\ref*{#2}}}
\newcommand{\coloredsubref}[3]{\hyperref[#2]{#1~\ref*{#2}{#3}}}
\newcommand{\Figref}[1]{\colref{Figure}{#1}}
\newcommand{\Secref}[1]{\colref{Section}{#1}}
\def\eqref#1{\colref{Equation}{#1}}
\def\Eqref#1{\colref{Equation}{#1}}
\newcommand{\Eqnref}[1]{\colref{Equation}{#1}}
\newcommand{\Tabref}[1]{\colref{Table}{#1}}
\def\1{\bm{1}}
\DeclareMathAlphabet{\mathsfit}{\encodingdefault}{\sfdefault}{m}{sl}
\SetMathAlphabet{\mathsfit}{bold}{\encodingdefault}{\sfdefault}{bx}{n}
\DeclareMathOperator*{\argmin}{arg\,min}
\theoremstyle{plain}
\theoremstyle{remark}
\theoremstyle{definition}
\theoremstyle{plain}
\theoremstyle{plain}
\theoremstyle{definition}
\providecommand{\corollaryname}{Corollary}
\providecommand{\lemmaname}{Lemma}
\providecommand{\problemname}{Problem}
\providecommand{\remarkname}{Remark}
\providecommand{\theoremname}{Theorem}
\newtheorem{theorem}{Theorem}
\newcommand{\mvec}[1]{{\underline{#1}}}
\newcommand{\bunderline}[2][4]{\underline{#2\mkern-#1mu}\mkern#1mu}
\newcommand{\grad}{\bunderline{{\nabla}}}
\newcommand{\setbuilder}[1]{\left\{ #1 \right\}}
\newcommand{\inner}[2]{\left( #1, #2 \right)}
\newcommand{\ibn}{\textsc{IBN}}
\newcommand{\ibnfull}{Immersed Boundary Network}
\newcommand{\nel}{n_{el}}
\newcommand{\mesh}{\mathcal{K}^h}
\newcommand{\norm}[1]{\left|\left|{#1}\right|\right|}
\newcommand{\logLogSlopeTriangle}[5]
{
	
	\pgfplotsextra
	{
		\pgfkeysgetvalue{/pgfplots/xmin}{\xmin}
		\pgfkeysgetvalue{/pgfplots/xmax}{\xmax}
		\pgfkeysgetvalue{/pgfplots/ymin}{\ymin}
		\pgfkeysgetvalue{/pgfplots/ymax}{\ymax}
		
		\pgfmathsetmacro{\xArel}{#1}
		\pgfmathsetmacro{\yArel}{#3}
		\pgfmathsetmacro{\xBrel}{#1-#2}
		\pgfmathsetmacro{\yBrel}{\yArel}
		\pgfmathsetmacro{\xCrel}{\xArel}
		
		\pgfmathsetmacro{\lnxB}{\xmin*(1-(#1-#2))+\xmax*(#1-#2)} 
		\pgfmathsetmacro{\lnxA}{\xmin*(1-#1)+\xmax*#1} 
		\pgfmathsetmacro{\lnyA}{\ymin*(1-#3)+\ymax*#3} 
		\pgfmathsetmacro{\lnyC}{\lnyA+#4*(\lnxA-\lnxB)}
		\pgfmathsetmacro{\yCrel}{\lnyC-\ymin)/(\ymax-\ymin)} 
		
		\coordinate (A) at (rel axis cs:\xArel,\yArel);
		\coordinate (B) at (rel axis cs:\xBrel,\yBrel);
		\coordinate (C) at (rel axis cs:\xCrel,\yCrel);
		
		\draw[#5]   (A)-- node[pos=0.5,anchor=north] {1}
		(B)-- 
		(C)-- node[pos=0.5,anchor=west] {#4}
		cycle;
	}
}
\theoremstyle{plain}
\newtheorem{lemma}[theorem]{Lemma}
\newtheorem{corollary}[theorem]{Corollary}
\theoremstyle{definition}
\newtheorem{assumption}[theorem]{Assumption}
\theoremstyle{remark}
\newtheorem{remark}[theorem]{Remark}
\begin{document}

\begin{center}
{\usefont{OT1}{phv}{b}{n}\selectfont\Large{Neural PDE Solvers for Irregular Domains}}

{\usefont{OT1}{phv}{}{}\selectfont\normalsize
{Biswajit Khara$^{1\dagger}$, Ethan Herron$^{1\dagger}$, Zhanhong Jiang$^4$, Aditya Balu$^2$, Chih-Hsuan Yang$^1$,\\ Kumar Saurabh$^1$, Anushrut Jignasu$^1$, Soumik Sarkar$^{1,2}$, Chinmay Hegde$^3$,\\ Adarsh Krishnamurthy$^{1,2}$, Baskar Ganapathysubramanian$^{1,2}$*}}

{\usefont{OT1}{phv}{}{}\selectfont\normalsize
{$^1$ Department of Mechanical Engineering, Iowa State University, Iowa, USA 50011\\
{$^2$ Translational AI Center, Iowa State University, Iowa, USA 50011\\
$^3$ Computer Science Department, New York University, New York, USA 10012\\
$^4$ Johnson Control Inc.\\
* Corresponding author: \href{mailto:barkarg@iastate.edu}{baskarg@iastate.edu}\\
$^\dagger$ Authors contributed equally
}}}
\end{center}

\begin{abstract}

Neural network-based approaches for solving partial differential equations (PDEs) have recently received special attention. However, the large majority of neural PDE solvers only apply to rectilinear domains, and do not systematically address the imposition of Dirichlet/Neumann boundary conditions over irregular domain boundaries. In this paper, we present a framework to neurally solve partial differential equations over domains with irregularly shaped (non-rectilinear) geometric boundaries. Our network takes in the shape of the domain as an input (represented using an unstructured point cloud, or any other parametric representation such as Non-Uniform Rational B-Splines) and is able to generalize to novel (unseen) irregular domains; the key technical ingredient to realizing this model is a novel approach for identifying the interior and exterior of the computational grid in a differentiable manner. We also perform a careful error analysis which reveals theoretical insights into several sources of error incurred in the model-building process. Finally, we showcase a wide variety of applications, along with favorable comparisons with ground truth solutions. 

\end{abstract}


\section{Introduction}
\textbf{Motivation} Most physical phenomena are modeled using a set of governing partial differential equations (PDEs). Numerical methods---finite difference (FDM), finite element (FEM), and spectral methods---for solving PDEs discretize the physical domain (into cells, elements, etc.) and \textit{approximate} the solution over this discretized domain using select families of basis functions~\citep{hughes2012finite,leveque2007finite,trefethen2000spectral}. A significant part of PDE solver technology involves solving PDEs on complex, irregular domains (for instance, flow across aerofoils in aeronautics or patient-specific organ geometries in medical diagnostics). This is a major challenge, as articulated in the NASA CFD 2030~\citep{slotnick2014cfd} vision (``Mesh generation and adaptivity continue to be significant bottlenecks...''). In particular, capturing the complex geometry, as well as rigorously accounting for the non-trivial boundary conditions on these complex geometries, has driven careers in mesh generation and PDE solver technology. 

Motivated by these challenges, this paper addresses the following problem, ``Can we design a neural PDE solver that can produce field solutions across arbitrary geometries?''. We do so by utilizing an analytical approach developed in the computational mechanics community---the immersed boundary method (IBM)~\citep{peskin2002immersed,saurabh2021industrial}. In such approaches, the irregular geometry is `immersed' in a regular grid, thus allowing standard meshes to model irregular geometries. We extend this powerful approach to neural PDE solvers, which enables the creation of PDE solvers that can produce field solutions for a distribution of irregular geometries. We show that such an approach also allows the natural incorporation of different boundary conditions over complex geometries; as a side benefit, our approach also allows us to compute \textit{a priori} error estimates using a combination of techniques from neural net generalization and finite element analysis.


\textbf{Neural PDE Solvers}
Since neural networks are powerful nonlinear function approximators, there has been a growing interest in using neural networks to solve PDEs \citep{raissi2019physics,kharazmi2019variational, sirignano2018dgm,yang2018physics, pang2019fpinns, karumuri2020simulator, han2018solving, michoski2019solving, samaniego2020energy, ramabathiran2021spinn, lu2019deeponet,botelho2020deep,balu2021distributed,wandel2021spline}. Unlike numerical methods, many of these methods do not require a mesh. But a common challenge most neural PDE solvers face is the efficient imposition of boundary conditions, especially on non-cartesian boundaries~\citep{lagari2020systematic}. 
Furthermore, among collocation-based neural solvers, the satisfaction of certain regularity conditions becomes non-trivial to impose~\citep{sukumar2022exact}. 



\textbf{Immersed Approach}
Classical numerical methods such as finite difference (FDM) or finite element methods (FEM) generally employ a grid or mesh to discretize the domain geometry and function space. 
Solving PDEs defined on complex geometries requires a mesh to be prepared before the analysis. This step, commonly known as the ``mesh generation'' step, is non-trivial and often expensive. 
Immersed methods~\citep{peskin2002immersed,mittal2005immersed} are one way to overcome this challenge. The computational grid is simplified in immersed methods by considering a rectilinear axis-aligned grid that encloses the irregular geometry (within which we seek a PDE solution). The irregularly-shaped geometry is then ``immersed'' in this background mesh (see Fig.~\ref{fig:domain-mesh}). Thus, a part of the background mesh forms the actual computational mesh; the rest of the background mesh is considered exterior, and thus not used in the computation of the PDE solution.

In this work, we combine the generalization capability of deep neural networks to solve PDEs defined on irregularly-shaped domains using ideas from immersed finite element methods~\citep{mittal2005immersed,zhang2004immersed,xu2016tetrahedral}. In recent years, immersed methods have been favored for massive parallel implementations since all computations are performed on a regular grid~\citep{bangerth2007deal,saurabh2021industrial,griffith2007adaptive,egan2021direct}. This fact translates to tensor-based operations in convolutional neural networks, which otherwise are unsuitable for dealing with complex geometrical contours. A key ingredient is the careful design of the loss function, along with a mechanism to determine the interior/exterior (in-out) of the computational domain. Our main contributions are as follows:


\begin{enumerate}[leftmargin=*, topsep=0pt, noitemsep, parsep=0pt]
    \item \textbf{[Framework]:} We present a parametric PDE-based loss function that learns robust and watertight boundary conditions imposed by complex geometries. Using this loss function, we train a deep neural network---Immersed Boundary Network (IBN)---that uses the geometric information as input to predict a field solution that satisfies the governing PDE over the arbitrary domain. We show that a single trained IBN can produce solutions to a PDE across a distribution of arbitrary shapes.
    \item \textbf{[Error Analysis]:} We provide analysis of convergence and generalization error bounds of the proposed PDE-based loss function with the immersed approach.
    \item \textbf{[Applications and Broader Impact]:} Finally, we use this parametric PDE-based loss function to learn solutions of PDEs over irregular geometries. We illustrate the approach on two PDEs---Poisson and Navier-Stokes---and a host of irregular geometries in 2D and 3D. IBN opens up fast design exploration and topology optimization for various societally critical applications such as shape design for energy harvesters, and aerodynamic design of vehicles. 
\end{enumerate}

\section{Mathematical Preliminaries}
\label{Sec:Formulation}

Consider domain $\Omega_B \subset \mathbb{R}^d $ ($ d\in \{2,3\} $) with a rectilinear boundary $ \Gamma_B $; and another domain $ \Omega_o \subset \Omega_B$ with an irregular boundary $ \Gamma_o $. Without loss of generality, the $ d $-dimensional unit interval $ [0,1]^d $ can be considered an example of $ \Omega_B $. Define $ \Omega = \Omega_B / \Omega_o $ (see \Figref{fig:domain-notations} for an illustration in 2D Euclidean space). On this non-rectilinear domain $ \Omega $, we now consider an abstract PDE (Eq 1a) with general boundary conditions (Eq 1b) given by:
\begin{subequations}\label{eq:pde-abstract}
	\begin{align}
	\mathcal{N}(u) &= f(\mvec{x}) \ \ \text{in} \ \ \Omega, \\
	\mathcal{A}(u) + \mathcal{D}(\nabla u) &= g \ \ \text{on} \ \ \Gamma_o \subset \partial\Omega.
	\end{align}
\end{subequations}
Here, $ u $ is an unknown function such that $ u: \Omega \mapsto \mathbb{R}$, $ \mathcal{N} $ is a differential operator (possibly nonlinear) and $ f(\mvec{x}) $ is a known function of the domain variable $ \mvec{x}\in \Omega $. The boundary conditions are defined by  operators $\mathcal{A}$ and $\mathcal{D}$. 
In this paper, we look at two concrete examples of \Eqnref{eq:pde-abstract}:  Poisson and Navier-Stokes equations.

\textbf{Poisson Equation}:
Poisson equation is frequently used to model steady-state mass/heat diffusion, electrostatics, surface reconstruction, etc. and is given by:
\begin{subequations}\label{eq:poisson-intro}
	\begin{align}
	-\Delta u &= f \ \ \text{in} \ \ \Omega, \\
	\alpha u + \beta \nabla u \cdot \mvec{n} &= g \ \ \text{on}\ \ \Gamma,
	\end{align} 	 
\end{subequations}
where $ u: \Omega\rightarrow \mathbb{R} $ is a scalar function that, depending on the underlying physics, represents the mass density, temperature, electric potential, or the surface indicator function, respectively. A spectrum of boundary conditions (Dirichlet, Neumann, Robin) are produced by varying the scalars $\alpha$ and $\beta$. 

\textbf{Navier-Stokes Equations}:
The Navier-Stokes equations are widely used to model fluid flow. The steady incompressible Navier-Stokes equations are given by:
\begin{subequations}\label{eq:ns-intro}
	\begin{align}
	\mvec{u}\cdot\grad\mvec{u} - \nu\Delta\mvec{u} + \grad p &= \mvec{f} \ \ \text{in} \ \ \Omega \\
	\mvec{u} &= \mvec{g} \ \ \text{on}\ \ \Gamma
	\end{align}
\end{subequations}
where $ \mvec{u}: \Omega\rightarrow\mathbb{R}^d $ is a vector valued function that represents the velocity field, and $ p: \Omega\rightarrow \mathbb{R} $ is a scalar representing the pressure in the fluid. The coefficient $ \nu $ is the viscosity of the fluid.

\begin{figure}[t!]
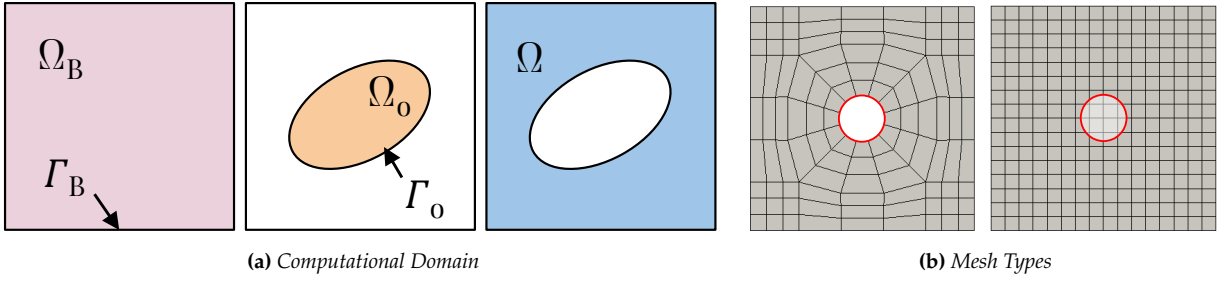

	\centering
	\begin{subfigure}[b]{0.58\linewidth}
	\includegraphics[width=0.99\linewidth,trim={3.3in 2.25in 3.3in 2.25in},clip]{Figures/Domain.pdf}
	\caption{Computational Domain}
	\label{fig:domain-notations}
	\end{subfigure}
	\begin{subfigure}[b]{0.41\linewidth}
	\includegraphics[trim={1.2in 1.05in 1.2in 1.05in},clip,width=0.99\linewidth]{Figures/MeshTypes.pdf}
	\caption{Mesh Types}
	\label{fig:mesh-types}
	\end{subfigure}
	\caption{(a) Schematic of a typical domain for the background mesh $ \Omega_B $ (with boundary $ \Gamma_B $), the embedded/immersed object $ \Omega_o $ (with boundary $ \Gamma_o $) and the computational domain $ \Omega $. (b) A ``body-fitted" mesh that discretely conforms to the object and an object embedded/immersed in a background mesh. This paper focuses on immersed type meshes with different object geometries.}
	\label{fig:domain-mesh}
\end{figure}


\textbf{Immersed Finite Element Method}: In this paper, we formulate our neural PDE solver based on an immersed FEM discretization. In terms of the notations developed above, we discretize $ \Omega_B $ using an axis-aligned grid (\Figref{fig:mesh-types} (right)) and we call this a ``background mesh''. To incorporate the effect of $ \Omega_o $, the boundary curve $ \Gamma_o $ is considered ``immersed'' in this background mesh, and the boundary conditions on $ \Gamma_o $ are applied approximately (in a weak manner) on the grid points that fall near this curve. The details of how the boundary conditions are applied are discussed in \Secref{sec:methodology}.

%

\section{Our Approach: \ibnfull{} (\ibn{})} \label{sec:methodology}


We will learn an \emph{immersed boundary network} (IBN) that produces a (field) solution to a given PDE and also (rigorously) adheres to the boundary conditions imposed by a family of complex geometries. 

Our IBN model is written as a parameterized function $G(\{P\};\theta)$, where $\theta$ is a set of tunable network weights, and $\{P\} (\subset \Gamma_o)$ represents the geometry (it can either be in the form of a point cloud or a set of NURBS control points representing the boundary of the complex geometry $ \Gamma_o $). Given such an input $\{P\}$, a trained network $ G $ should be able to predict both the interior/exterior of the computational domain and the solution to the PDE. Thus, the neural mapping is more precisely defined as  $G(\{P\};\theta): \mathbb{R}^{n\times d} \times \mathbb{R}^{|\theta|} \rightarrow \mathbb{R}^{N \times (1+ n_{dof})}$, where $n$ is the cardinality of $\{P\}$, $d$ is the spatial dimension, $N$ is the number of points used to discretize the mesh and $n_{dof}$ is the number of unknowns in the PDE (e.g., $n_{dof}=1$ for Poisson's equation, and $n_{dof}=(d+1)$ for Navier-Stokes equations). Suppose $\mathbf{U}$ is the discrete approximation of the solution field $u$ and $\boldsymbol{\chi}$ a discrete representation of the \emph{occupancy function} that indicates which points on the background mesh are interior/exterior to the immersed object. Then, given $ \{P\} $, we have $[\boldsymbol{\chi}, \mathbf{U}] = G(\{P\};\theta)$, where $ \boldsymbol{\chi} \in \mathbb{R}^{N \times 1}$ and $\mathbf{U} \in \mathbb{R}^{N \times n_{dof}}$, and the network architecture used to encode the geometry information can be either an MLP (for NURBS curves/surfaces) or Graph Convolution layers (for point clouds etc.). Further, for predicting fields, we use 2D/3D transpose convolution and upsampling layers (more details provided in Supplement).

\begin{figure}[t!]
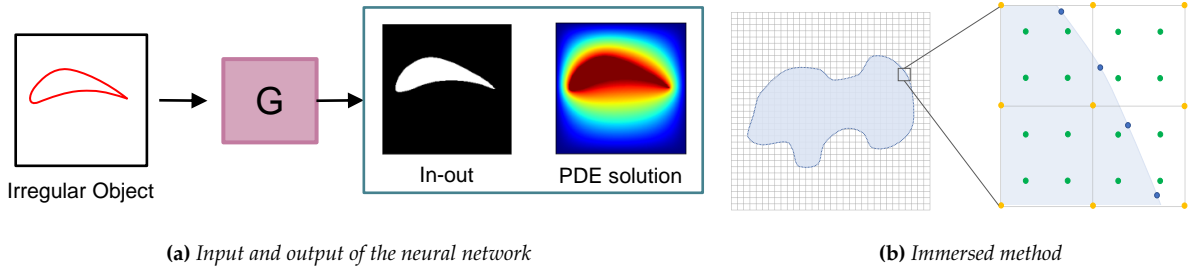

	\centering
	\begin{subfigure}[b]{0.59\linewidth}
	\includegraphics[trim={3.1in 2.2in 3.1in 2.2in},clip,width=0.99\linewidth]{Figures/IMNeuFENet.pdf}
	\caption{Input and output of the neural network}
	\label{fig:network-input-output-abstract}
	\end{subfigure}
	\begin{subfigure}[b]{0.4\linewidth}
	\includegraphics[width=0.99\linewidth,trim={2.0in 1.5in 2.0in 1.5in},clip]{Figures/ImmersedElements.pdf}
	\caption{Immersed method}
    \label{fig:u_pts_grid}
	\end{subfigure}
	\caption{(a) The input to the network $ G $ is a representation of the object, possibly a point cloud or a NURBS  curve/surface. A trained network can then predict the in-out information of the object and the field solution. (b) A visual representation of the irregular object intersecting the element. Each yellow dot denotes the pixels; green dots denote the Gauss points used to interpolate the value of the finite element; blue dots denote points from the object boundary. The shaded region is the area inside the boundary.}
	\label{fig:immersed}
\end{figure}

\subsection{Numerical Framework}

We rely on a FEM discretization ($\mathbf{U}$, defined above) of the field solution  as well as its spatial derivatives. Let $ \mesh $ be a discretization of $ \Omega_B $ into $ \nel $ finite elements $ K_i $ such that $ \cup_{\nel}K_i = \Omega_B $ (Note, $\nel$ finite elements produce $N$ nodes, each of which have $n_{dof}$ predicted field values, resulting in the cardinality of the prediction of $G$). Following standard FEM analysis, we define a function space $V^h = \setbuilder{v^h \in V: v^h|_{K}\in Poly_m(K), \ K\in \mesh}$, where $ Poly_m(K) $ denotes the set of polynomial functions of degree $ m $ defined on $ K $. Since we are dealing with an immersed finite element method, $ \mesh $ is composed of a rectilinear axis-aligned grid with $ N $ nodes. Now, suppose  $ \{ \mathcal{B}_i(\mvec{x}) \}_{i = 1}^{N} $ is a suitable basis that span $ V^h $. Any function $ u^h \in V^h $ can be written as
\begin{align}\label{eq:def:fem-function-approximation-0}
u^h(\mvec{x}) = \sum_{i = 1}^{N} \mathcal{B}_i(\mvec{x})U_i,
\end{align}
where $ U_i $ are the solution values at the nodal points in the mesh $ \mesh $, with $\mathbf{U} = \{U_1, ..., U_i, ..., U_N\}$. In the sequel, when we refer to the discrete approximation of the solution, we mean either $ u^h $ (the functional representation) or the discrete set of nodal values $\{U_i\}$ (the vector representation). The two will be used interchangeably, assuming that the underlying basis functions $ \mathcal{B}(\mvec{x}) $ are known. 


\subsection{Training Loss Function}
A key novelty of our method lies in formulating an appropriate PDE loss function for irregular geometries and their associated numerics. At a high level, our loss consists of two parts: one addressing the given PDE and the other addressing the boundary conditions imposed by (complex, irregular) geometries. Thus we can break down the overall loss function into two main components: (i) predicting the field $u^h$ (PDE Loss), and (ii) predicting the occupancy function $\chi$ and exact imposition of boundary conditions (Geometry Loss).

\subsubsection{PDE Loss} The PDE component of the loss function is standard and follows directly from the Galerkin formulation of the given PDE. For the abstract PDE in \Eqnref{eq:pde-abstract}, the corresponding Galerkin formulation (with boundary conditions weakly satisfied) is to find $ u^h \in V^h$ such that
\begin{align}\label{eq:galerkin-weak-form-abstract}
	\inner{v^h}{[\mathcal{N}(u^h)-f]}_{\Omega} + \lambda\inner{v^h}{[\alpha u^h + \beta \nabla u^h \cdot n-g^h]}_{\Gamma} = 0, \ \ \forall v^h\in V^h.
\end{align}

In general, if a unique $ u^h $ exists, then for any other function $ w^h\in V^h $ ($ w^h\ne u^h $) will make the right hand side of \Eqnref{eq:galerkin-weak-form-abstract} nonzero. We call this the residual of \Eqnref{eq:galerkin-weak-form-abstract}, i.e.,
\begin{align}\label{eq:galerkin-residual-abstract}
	R = \inner{v^h}{[\mathcal{N}(w^h)-f]}_{\Omega} + \lambda\inner{v^h}{[\alpha w^h + \beta \nabla w^h \cdot n-g^h]}_{\Gamma},
\end{align} 
where, $\ w^h\in V^h $. Minimizing this residual should provide us with a unique solution. Unfortunately, we cannot perform integrations over $ \Omega $ in a straightforward manner, but we \emph{can} do so on $ \Omega_B $. On the other hand, $ \Omega_B $ also contains $ \Omega_o $ but we do not want to perform integrations on $ \Omega_o $. We resolve this via the occupancy function $ \chi^h $, which helps us perform integrations on $ \Omega_B $ while at the same time enforcing no contribution from $ \Omega_o $ to $ R $.

\subsubsection{Geometry Loss}\label{sec:geometry-loss} 

This corresponds to imposition of boundary conditions on the surface defined by points $\mvec{p}_i \in \{P\} \subset \Gamma_o$  and correct prediction of the occupancy function $ \chi $. For given surface points $\mvec{p}_i$, the discrete solution $u^h(\mvec{x})$ on these boundary points is $u(\mvec{p}_i) = \sum_{j = 1}^{N} \mathcal{B}_j(\mvec{p}_i) U_j~~\forall \mvec{p}_i \in \{P\}$. This interpolation (separate from the Gauss points, see \Figref{fig:immersed}(b)) helps us exactly impose the boundary conditions. (More details on the interpolation are explained in Supplement).

Predicting the occupancy function $ \chi $ can be obtained using two approaches : (i) computing winding numbers, or (ii) solving the Eikonal equation as another unknown.

\textbf{Generalized Winding Number:} This is the general form of the winding number which counts the number of loops of the boundary around any point $\mvec{x}$. The generalized winding number can be calculated as the surface integral of the differential solid angle ($\int_{\Gamma_o} dS(\mvec{x})$)~\citep{barill2018fast}. Given a boundary representation of an object, we can use the generalized winding number to determine in-out information since any point inside the object boundary will have a winding number $1$. The in-out occupancy $\chi_w$ is calculated as:
\begin{equation}
\chi_{w}(\mvec{x}) = \sum_{i=1} ^{n} a_i \frac{(\mvec{p_i} - \mvec{x}) \cdot \hat{\mvec{n}}_i}{4\pi || \mvec{p}_i - \mvec{x} ||^3}
\label{winding numbers}
\end{equation}
where $a_i$ is the Voronoi area of a given point and $\mvec{n_i}$ is the normal for a given point $\mvec{p_i}$~\citep{barill2018fast}. Using $\chi_w$, we can obtain all nodes inside the geometry ($u^h \in \Omega_o$) by using $\chi_{w}$ as a masking function. Since this is deterministically obtained, the network $G$ only predicts $N \times n_{dof}$ values. However, there are some limitations of $\chi_{w}$ not being robust to noise. Hence, an alternate approach using signed distance fields (SDF) has recently become popular.

\textbf{Eikonal Equation:} SDF ($\phi$) denotes the distance to the closest point on the surface $\Gamma_o$. The sign of the distance fields can be used as occupancy information ($\chi_e = sign(\phi)$). The signed distance field is obtained by solving the viscosity stabilized version of the Eikonal equation ($(1+\tau)\|\grad \phi\| - \tau (\nabla \phi) = 1$, $\tau \in [0,0.5]$)~\citep{phasetransitionslipman}. For this, the network $G$, will predict one additional field, i.e., $N \times (1 + n_{dof})$ values, unlike the Winding number-based approach. In this work, we demonstrate the usage of both $\chi_e$ and $\chi_w$ for obtaining the mask and applying the boundary conditions.

Combining the PDE Loss and the Geometric Loss, the complete loss function is:
\begin{align}\label{eq:combined-loss-winding-num}
	\mathcal{J}(w^h) = \underbrace{\norm{R(w^h)}_2}_{\text{PDE loss}}  + \underbrace{\lambda_1 \sum_{i = 1}^{{n}} |\alpha w^h(\mvec{x}_i) + \beta \nabla w^h(\mvec{x}_i) \cdot n - g_i|^2}_{\text{boundary condition on object}} + \underbrace{ \lambda_2 \sum_{j\in \{j|\chi >0\}} |w^h - g_{\text{in}}|^2}_{\text{exterior points}},
\end{align}
where $ g_{\text{i}}$ and $ g_{\text{in}} $ represent suitable functions defined in $\Gamma_o$ and $ \Omega_o $ respectively. We assume that $g_{\text{i}}$ conforms to the boundary $\Gamma_o$. i.e., the point cloud is not noisy, and the boundary conditions are applied accurately. 
Thus we can then define the minimization problem statement as
\begin{align}
u^h = \argmin_{w^h\in V^h} \mathcal{J}(w^h).
\end{align}
which can be approximately solved using multiple gradient descent steps. To summarize, the first term in \Eqnref{eq:combined-loss-winding-num} represents the loss function derived from the PDE, the second term attempts to ``weakly'' set the boundary points to their respective prescribed values, and finally, the third term attempts to set the object interior points to a fixed value conforming to the object boundary. Appropriate values for $ \lambda_1 $ and $ \lambda_2 $ are generally problem dependent, but for our experiments we take $ \lambda_1 = \lambda_2 = O(1) $. However, in case of high gradient in the solution values near boundaries, these coefficients scale $ \propto \frac{1}{h} $. This closely aligns with the results for these coefficients from the analysis of the IBM \citep{ern2021finite,schillinger2016non}.

\section{Error Analysis}\label{theory}
In this section, we summarize theoretical results for the convergence behavior and the generalization error for \ibn{}. Due to tight space constraints, all detailed proofs are relegated to the supplementary material. A summary of the theoretical results is presented in \Tabref{table:simDisimCoefNewDef}.

\begin{table}[t!]
\caption{Summary of different generalization bounds. Please refer to the Appendix for the detailed derivation of Corollary~\ref{coro_2} and Corollary~\ref{coro_3}.}
\centering
\begin{tabular}{l l l l l}
    \toprule
    \textbf{Statement} & \textbf{Setting} & \textbf{Property} &\textbf{Step Size}& \textbf{Rate} \\ \midrule
      Corollary~\ref{coro_1_main}   &      Batch               &    $\mu$-strong convexity                &Constant& $\mathcal{O}\bigg(N^2\rho_1^K+\frac{1}{(NK)^{\alpha}}\bigg)$ \\
      Corollary~\ref{coro_2}   &   Stochastic                  &   $\mu$-PL condition                & Constant& $\mathcal{O}\bigg(N^2\rho_2^K+N^2\sigma^2+\frac{1}{(NK)^{\alpha}}\bigg)$\\ 
      Corollary~\ref{coro_3}   &    Stochastic                 & $\mu$-PL condition              & Diminishing& $\mathcal{O}\bigg(\frac{N^2}{K}+\frac{1}{(NK)^{\alpha}}\bigg)$\\\bottomrule
\end{tabular}
\begin{tablenotes}
\item[1] $\mu$: the strongly convex constant defined in the later section; \item[2] $N$: the number of basis functions; \item[3] $\rho_1,\rho_2\in(0,1)$; \item[4] $K$: the number of epochs; \item[5] $\alpha$: the polynomial order of FEM basis functions; \item[6] PL: Polyak-Lojasiewicz; \item[7] $\sigma$: second moment constant.
\end{tablenotes}
\label{table:simDisimCoefNewDef}
\end{table}

For a given geometry $\Gamma_o$, we denote $u$ as the original optimum solution to the PDE and $u^h$ be the optimal solution at discretization level $h$; these are functions evaluated at any point $\mvec{x} \in \Omega$. 
From classical FEM analysis, we will bound the discretization error $\| u^h - u \|$ as a function of $h$.
Now, the field predicted by the network $G(\{P\}, \theta)$ is typically an inaccurate version of $u_h$, and we can express the error $e_G$ as follows:
\begin{equation}\label{dynamic_gen_err}
    \|e_G^k\|=\|u_{\theta_k}-u\| = \|u_{\theta_k}-u_{\theta^*}+u_{\theta^*}-u^h+u^h-u\|
\end{equation}
where, $u_{\theta_k}$ is the field solutions corresponding to the network parameters $\theta_k$ at the $k^{th}$ iteration of the optimization and $u_{\theta^*}$ is the theoretical optimum (i.e., a limit point) of $u_{\theta_k}$.

We analyze the quadratic form of $\|e^k_G\|$, i.e., $\|e^k_G\|^2$; in the stochastic setting, the relevant quantity is the second moment, i.e., $\mathbb{E}[\|e^k_G\|^2]$. By applying the fundamental inequality $\|a+b+c\|^2\leq3(\|a\|^2+\|b\|^2+\|c\|^2)$, we can obtain
\begin{equation}\label{errorterms}
    \|e^k_G\|^2\leq 3(\underbrace{\|u_{\theta_k}-u_{\theta^*}\|^2}_{\text{Optimization error}}+\underbrace{\|u_{\theta^*}-u^h\|^2}_{\text{Modeling error}}+\underbrace{\|u^h-u\|^2}_{\text{Discretization error}}).
\end{equation}

Thus the generalization error is bounded above by three different terms, which are respectively the \textit{optimization error}, \textit{modeling error}, and \textit{discretization error}. The discretization error is only dependent on the choice of the discretization and the modeling error is based on the choice of the network architecture and hence both the errors remains static through out the optimization.

To facilitate the understanding of how $\|e^k_G\|^2$ converges and to simplify the analysis, we will next combine the optimization error and modeling error, while following standard FEM analysis to bound the discretization error. We make the following assumptions.

\begin{assumption}\label{assumption_1}
There exists a constant $c>0$ such that $\|u_{\theta^*}-u^h\|\leq c \|u_{\theta_k}-u_{\theta^*}\|, \forall k\geq 0$.
\end{assumption}
Assumption~\ref{assumption_1} implies that the modeling error can be upper bounded by the optimization error up to a constant. This should be possible since we can always find some constant $c$ to ensure that the above assumption holds. 
Therefore, we now obtain
\begin{equation}\label{dynamic_gen_err_1}
    \|e^k_G\|^2\leq 3[(1+c^2)\|u_{\theta_k}-u_{\theta^*}\|^2+\|u^h-u\|^2]. 
\end{equation}

\begin{assumption}\label{assumption_2}
There exists a constant $L>0$ such that for all $\mvec{x}, \mvec{y} \in\mathbb{R}^d$, $\|U_i(\mvec{x})-U_i(\mvec{y})\|\leq L\|\mvec{x}-\mvec{y}\|$, for all $i\in\{1,2,...,N\}$.
\end{assumption}
Assumption~\ref{assumption_2} implies that the prediction function provided by \ibn{} is Lipschitz continuous~\citep{virmaux2018lipschitz}, which signifies the robustness of the predictions obtained from deep neural networks under perturbations of the network parameters. This is a standard assumption made during the analysis of neural network generalization. 

We first investigate the convergence rate when $\mathcal{J}$ is \textit{strongly convex and in a batch setting}. Notice that all terms in \Eqnref{eq:combined-loss-winding-num}, including the boundary loss (which is $L_2$ loss of the imposed boundary conditions) and the PDE residual loss (from variational arguments) are strongly convex.

\begin{lemma}\label{lemma_4}
Assume that the basis function $\mathcal{B}_i(\mvec{x})$ are chosen such that they are at least continuously differentiable locally over a mesh. Then the following relationship holds true:
\begin{equation}
    \|u^h-u\|^2\leq Ch^{2\alpha},
\end{equation}
where $C>0$ and $\alpha$ is the order of continuous derivative. Typically, $\alpha\geq 1$.
\end{lemma}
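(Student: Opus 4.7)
The plan is to bound the discretization error in two stages that are standard in classical FEM analysis: (i) show that the Galerkin solution $u^h$ is essentially the best approximation to $u$ from $V^h$ in the natural energy norm, and (ii) bound the best approximation error by an interpolation estimate on each element of $\mesh$, then sum. Combining the two delivers the desired $h^{2\alpha}$ rate.

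First I would invoke Galerkin orthogonality together with continuity and coercivity of the (bi)linear form induced by $\mathcal{N}$ restricted to $V^h$. This yields a Céa-type estimate of the form $\|u - u^h\| \le C_1 \inf_{v^h \in V^h} \|u - v^h\|$ in the appropriate energy/$H^1$ norm. For the immersed setting this requires a slightly refined argument because the boundary condition on $\Gamma_o$ is imposed weakly through the penalty terms in \Eqnref{eq:combined-loss-winding-num}; I would cite the stability/consistency analysis for Nitsche-type immersed methods (e.g., \citet{ern2021finite,schillinger2016non}, already referenced in the paper) to justify that the discrete problem is well posed with a discretization-independent inf-sup constant, so the Céa inequality still holds up to a constant depending on the penalty parameters $\lambda_1,\lambda_2$.

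Next I would specialize the right-hand side by choosing $v^h = I_h u$, the nodal interpolant of $u$ onto $V^h$. Because every basis function $\mathcal{B}_i$ is at least $C^0$ and is polynomial of degree $m$ on each element $K$, a local Taylor / Bramble--Hilbert argument gives on each $K \in \mesh$
\begin{equation}
    \|u - I_h u\|_{H^1(K)}^2 \;\le\; C_K\, h_K^{2\alpha}\, |u|_{H^{\alpha+1}(K)}^2,
\end{equation}
where $\alpha \le m$ is the order of continuous derivatives available and $h_K$ is the element diameter. Summing over all $\nel$ elements and using shape-regularity (so that $h_K \le h$ uniformly) yields $\|u - I_h u\|^2 \le C\, h^{2\alpha} |u|_{H^{\alpha+1}(\Omega_B)}^2$. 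Combining with the Céa-type bound produces
\begin{equation}
    \|u - u^h\|^2 \;\le\; C\, h^{2\alpha},
\end{equation}
with $C$ absorbing the (bounded) Sobolev semi-norm of $u$ and the stability constants.

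The main obstacle is handling the \emph{cut elements}---those $K$ that straddle $\Gamma_o$---because the standard Bramble--Hilbert estimate assumes integration over the full element and shape-regularity does not control the geometry of $K \cap \Omega$. I would address this by relying on the occupancy function $\chi$ to restrict integration to $K \cap \Omega$ and using a trace/extension argument: extend $u$ smoothly to a neighborhood of $\Omega$ and apply the element-wise interpolation estimate to the extension, so that the $K\cap\Omega$ contribution is bounded by the full-element contribution. This step may require an additional mild assumption that the interface $\Gamma_o$ is resolved finely enough relative to $h$ (a ``cut-cell regularity'' condition), which is standard in the immersed FEM literature. Once this subtlety is handled, the result follows directly from the estimates above.
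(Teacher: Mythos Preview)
Your proposal is correct and, in fact, far more detailed than what the paper provides: the paper's own ``proof'' of this lemma consists entirely of the single sentence \emph{``The proof follows an analysis similar to that presented in~\citep{ern2021finite,main2018shifted,hansbo2002unfitted}''} with no further argument. Your two-stage outline (C\'ea-type quasi-optimality plus elementwise Bramble--Hilbert interpolation, adapted to the immersed/Nitsche setting with care on cut elements) is precisely the standard argument that those references contain, so you are essentially supplying the details the paper defers to its citations.
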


\begin{theorem}\label{theorem_1_main}
Suppose that $\mathcal{J}$ is $\mu$-strongly convex and $\beta$-smooth. Let Assumptions~\ref{assumption_1} and~\ref{assumption_2} hold. By applying the gradient descent algorithm with a constant step size $\eta_k=\eta=\frac{2}{\mu+\beta}$, the generalization error $\|e^K_G\|^2$ satisfies the following relationship after $K$ epochs,
\begin{equation}
    \|e^K_G\|^2\leq 3(1+c^2)L^2N^2\hat{\mathcal{B}}^2\bigg(1-\frac{2}{\kappa+1}\bigg)^{2K}\|\theta_0-\theta^*\|^2+3Ch^{2\alpha},
\end{equation}
where $c, N$ are defined in Assumptions~\ref{assumption_1} and~\ref{assumption_2}, $\mu$ is the strong convexity constant, $\beta$ is the smoothness constant, $L$ is the Lipschitz continuous constant, $\hat{\mathcal{B}}$ is the upper bound of the basis function $\mathcal{B}_i(\mvec{x})$, $\kappa=\frac{\beta}{\mu}$ is the condition number, and $\theta_0$ and $\theta^*$ are respectively the initialization and optimum of $\theta$.
\end{theorem}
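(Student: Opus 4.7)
The plan is to combine the pre-decomposed bound~\eqref{dynamic_gen_err_1} with a parameter-to-function transfer via Assumption~\ref{assumption_2} and the textbook linear contraction of gradient descent for strongly convex, smooth objectives; Lemma~\ref{lemma_4} then supplies the discretization tail.

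First I would start from~\eqref{dynamic_gen_err_1}, which already folds the modeling term $\|u_{\theta^*}-u^h\|$ into the optimization term via Assumption~\ref{assumption_1}:
\begin{equation*}
\|e^K_G\|^2 \leq 3(1+c^2)\|u_{\theta_K}-u_{\theta^*}\|^2 + 3\|u^h-u\|^2.
\end{equation*}
The discretization piece is immediately bounded by Lemma~\ref{lemma_4}, contributing the $3Ch^{2\alpha}$ tail, so everything reduces to controlling $\|u_{\theta_K}-u_{\theta^*}\|^2$.

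Next I would translate the field-space quantity into the parameter metric using the FEM expansion $u_{\theta}(\mvec{x})=\sum_{i=1}^{N}\mathcal{B}_i(\mvec{x})\,U_i(\theta)$. The triangle inequality together with the uniform pointwise bound $|\mathcal{B}_i(\mvec{x})|\leq \hat{\mathcal{B}}$ and the $L$-Lipschitz property of each $U_i(\cdot)$ in $\theta$ (Assumption~\ref{assumption_2}) yields
\begin{equation*}
|u_{\theta_K}(\mvec{x})-u_{\theta^*}(\mvec{x})| \leq \hat{\mathcal{B}}\sum_{i=1}^{N}|U_i(\theta_K)-U_i(\theta^*)| \leq NL\hat{\mathcal{B}}\,\|\theta_K-\theta^*\|,
\end{equation*}
so that $\|u_{\theta_K}-u_{\theta^*}\|^2 \leq L^2 N^2 \hat{\mathcal{B}}^2 \|\theta_K-\theta^*\|^2$. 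I would then invoke the classical result that gradient descent on a $\mu$-strongly convex, $\beta$-smooth objective with the optimal constant step $\eta=2/(\mu+\beta)$ contracts as $\|\theta_K-\theta^*\|^2 \leq \bigl(1-\tfrac{2}{\kappa+1}\bigr)^{2K}\|\theta_0-\theta^*\|^2$ with $\kappa=\beta/\mu$. Substituting this into the previous display and adding back the Lemma~\ref{lemma_4} tail reproduces the theorem exactly.

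The hard part will not be any individual inequality but the passage from parameter space to function space. Assumption~\ref{assumption_2} has to be read as Lipschitz continuity of the nodal coefficient maps $\theta\mapsto U_i(\theta)$ (its stated wording in terms of spatial arguments is a notational slip that needs to be resolved first); moreover, the crude triangle-inequality sum over the $N$ basis terms is what produces the pessimistic $N^2$ factor appearing in the bound, which a Cauchy--Schwarz argument combined with a mesh-dependent control on $\sum_i \mathcal{B}_i(\mvec{x})^2$ could in principle sharpen but which I would not pursue since $N$ is treated as fixed. A secondary subtlety worth flagging is that the "strong convexity of $\mathcal{J}$" used here is implicitly meant in the network parameters $\theta$, which is what actually justifies applying the classical linear GD rate to the iterates $\theta_k$; this is a working hypothesis about the loss landscape near $\theta^*$ rather than a property derivable from convexity of~\eqref{eq:combined-loss-winding-num} in $w^h$ alone, and I would call it out explicitly when writing up the argument.
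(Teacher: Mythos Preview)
Your proposal is correct and mirrors the paper's own argument essentially step for step: the paper also starts from~\eqref{dynamic_gen_err_1}, transfers $\|u_{\theta_K}-u_{\theta^*}\|$ to $\|\theta_K-\theta^*\|$ via the triangle inequality, the basis bound $\hat{\mathcal{B}}$, and Assumption~\ref{assumption_2}, then plugs in the standard linear GD contraction (stated there as a separate lemma) and Lemma~\ref{lemma_4}. Your side remarks about the notational slip in Assumption~\ref{assumption_2} and about strong convexity being tacitly assumed in the $\theta$-parameterization are apt and go beyond what the paper makes explicit.
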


Theorem~\ref{theorem_1_main} implies that the generalization error of \ibn{} is upper bounded by two terms, with the first term related to the optimization error and the second term the static error due to the discretization. Additionally, in an asymptotic manner, we have that
\begin{equation}
    \lim_{K\to\infty}\|e^K_G\|^2\leq 3Ch^{2\alpha},
\end{equation}
which suggests that the generalization error will ultimately be dominated by the discretization error determined by the resolution $h$ and the order of the FEM basis function $\alpha$ (here $\alpha = 1$), both of which rely on the definition of basis functions $\{\mathcal{B}_i(\mvec{x})\}_{i=1}^N$. 
If $h$ is chosen such that $h=\mathcal{O}(\frac{1}{\sqrt{KN}})$ (justified by~\citep{larson2013finite}), the following corollary can be obtained:
\begin{corollary}\label{coro_1_main}
Suppose that $h=\mathcal{O}(\frac{1}{\sqrt{KN}})$. Given all conditions and parameters from Theorem~\ref{theorem_1_main}, the generalization error achieves an overall \textit{sublinear} convergence rate
\begin{equation}
    \|e^K_G\|^2\leq\mathcal{O}\bigg(N^2\rho_1^K+\frac{1}{(NK)^{\alpha}}\bigg).
\end{equation}
where $\rho_1$ is a constant smaller than 1.
\end{corollary}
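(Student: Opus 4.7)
The plan is to derive Corollary~\ref{coro_1_main} as a direct specialization of Theorem~\ref{theorem_1_main} under the prescribed scaling of the mesh size $h$. Since Theorem~\ref{theorem_1_main} already packages the generalization error into a clean sum of an optimization/modeling term and a discretization term, the proof should reduce to substituting the assumed scaling into that bound and absorbing all remaining constants into big-$\mathcal{O}$ notation.

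First I would recall the bound from Theorem~\ref{theorem_1_main},
\begin{equation*}
\|e^K_G\|^2\leq 3(1+c^2)L^2 N^2 \hat{\mathcal{B}}^2\bigg(1-\frac{2}{\kappa+1}\bigg)^{2K}\|\theta_0-\theta^*\|^2 + 3C h^{2\alpha},
\end{equation*}
and treat its two summands separately. For the first (contractive) term, I would define $\rho_1 := \bigl(1-\tfrac{2}{\kappa+1}\bigr)^2$ and observe that, since the condition number satisfies $\kappa = \beta/\mu \geq 1$ (strong convexity with smoothness), we have $\rho_1 \in (0,1)$, so the base of the geometric factor is strictly less than one. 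All prefactors $3(1+c^2)L^2\hat{\mathcal{B}}^2\|\theta_0-\theta^*\|^2$ are independent of $N$ and $K$ and can be absorbed into the implicit constant, producing $\mathcal{O}(N^2 \rho_1^K)$.

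Next I would plug the assumed scaling $h = \mathcal{O}\bigl(\tfrac{1}{\sqrt{KN}}\bigr)$ into the second summand. Lemma~\ref{lemma_4} gives $\|u^h-u\|^2 \leq C h^{2\alpha}$, so the substitution yields
\begin{equation*}
3C h^{2\alpha} \;=\; \mathcal{O}\!\left(\frac{1}{(NK)^{\alpha}}\right),
\end{equation*}
where the constant $3C$ (depending only on the PDE data and the regularity of the basis) is swallowed by the $\mathcal{O}$. I would briefly remark that the choice $h=\mathcal{O}(1/\sqrt{NK})$ is the standard balanced scaling used in FEM analysis (as in \citep{larson2013finite}) and is what makes the discretization term compatible with the size of the optimization error after $K$ steps.

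Finally I would combine the two controlled terms to conclude
\begin{equation*}
\|e^K_G\|^2 \;\leq\; \mathcal{O}\!\left(N^2 \rho_1^K + \frac{1}{(NK)^{\alpha}}\right),
\end{equation*}
and note that the overall rate is \emph{sublinear} because the first term decays geometrically (and thus subdominantly) in $K$, while the second term decays only polynomially in $NK$ and therefore dictates the asymptotic behavior. There is no real obstacle here beyond bookkeeping: the only minor technical point worth double-checking is that $\rho_1<1$ under the stated strong-convexity and smoothness assumptions, which follows immediately from $\kappa\geq 1$. Thus the corollary is essentially a consequence of the theorem plus the chosen mesh scaling, with Lemma~\ref{lemma_4} supplying the discretization-error estimate.
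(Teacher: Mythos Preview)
Your proposal is correct and matches the paper's approach exactly: the paper treats this corollary as an immediate consequence of Theorem~\ref{theorem_1_main} by substituting $h=\mathcal{O}(1/\sqrt{KN})$ into the discretization term and identifying $\rho_1=\bigl(1-\tfrac{2}{\kappa+1}\bigr)^2\in(0,1)$. No additional ideas are used beyond the bookkeeping you describe.
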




In summary, our upper bound on the generalization error can be separated into two terms, both of which converge to zero as the number of training epochs $K$ become large --- provided the discretization is also chosen inversely with $K$. Qualitatively, the above bound provides a thumb rule to set the discretization of the mesh, $h$, based on available computation time (measured in the number of epochs $K$) and the desired error level $e_G$. From a practical perspective, the bound allows estimation of resource requirement for a desired discretization and error level.

\section{Experimental Results}\label{sec:results}
In this section, we provide results from our proposed immersed neural approach. We highlight single instance field solutions for Poisson's and Navier-Stokes equations and the field solutions to Poisson's equation for a family of randomly generated shapes. As previously mentioned, the IBN framework can map geometry representation to the field solution of a given PDE. While the geometry representation could be an unstructured point cloud or other structured representations such as NURBS control points, the PDE field solution is represented as a uniform grid. We use a single Nvidia Tesla P40 GPU in all the experiments shown below. Each network used the Adam optimizer with a learning rate of 3e-4. Further, as explained in \Secref{sec:geometry-loss}, we have two approaches for obtaining the occupancy information($\chi$). A comparison between both the methods is provided in the Supplement, and we only show generalized winding number-based results here.

\begin{figure}[b!]
	\centering
    \includegraphics[trim=0.2in 0.1in 0.1in 0.55in,clip,width=0.9\linewidth]{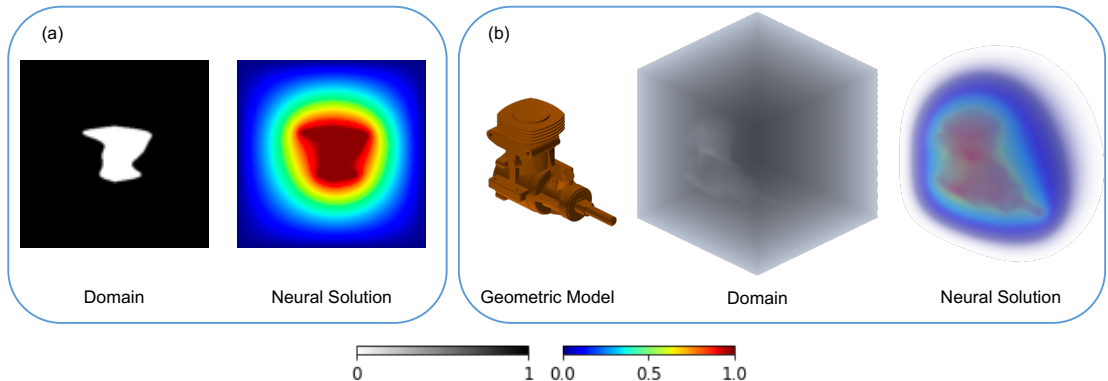}
	\caption{(a): Left: The in-out of the domain for the immersed solve. Right:~Solution to the Poisson's equation with an immersed object in 2D. (b) Left: The geometry model being immersed. Center: The in-out information. Right: Solution to Poisson's equation using the IBN framework.}
	\label{fig:imdiff-poisson-results}
\end{figure}

\subsection*{Solution to Poisson's Equation in 2D and 3D:}
For solving Poisson's Equation in 2D, we first generate a large geometry data (a family of NURBS curves sampled randomly). We ensure that the shapes are smooth and do not have any discontinuities and self-intersections. Once the shapes are generated, we obtain point clouds composed of 1,000 points, and all field solutions were represented using a $128\times128$ resolution grid (represented as an image). Our neural network to map the point cloud to the image domain was composed of 6 MLP layers followed by 8 layers of convolution and transpose convolution operations. Here, we choose simple MLP layers, but more sophisticated architectures such as DGCNNs could also be used. \Figref{fig:imdiff-poisson-results}(a) shows the results for one of the test geometries (the occupancy information $\chi$ and the solution to Poisson's equation $U$).

The same problem can be solved using 3D parametric datasets such as the ABC dataset, ShapeNet, etc. We use a subset of geometries from the ABC dataset to show results here. The process remains the same, except for the resolution of the grid being $196\times196\times196$ in 3D. \Figref{fig:imdiff-poisson-results}(b) shows one of the 3D geometries used (an Engine), the corresponding domain, and the volume rendering of the neural solution to the Poisson's equation.

\subsection*{Convergence Analysis:}
To check the accuracy of our method, we solve Poisson's equation \Eqnref{eq:poisson-intro} on a circular disk of radius $ R = 0.25 $ immersed in a unit square. The forcing function $ f = 1 $ on the disk, and the boundary condition is specified as $ u = 0 $ at the perimeter of the disk. Therefore, $ \Omega_o = \{r<R\}$ and $ \Gamma_o = \{r=R\} $. The exact solution to this problem is given by $ u(r) = \frac{1}{4}\left(R^2-r^2\right) $, where $ r = \sqrt{x^2+y^2} $ is the radial position of a point on the disk.

\begin{figure}[h!]
    \begin{subfigure}[b]{0.67\linewidth}
		\includegraphics[trim=0 0 0 0,clip,width=0.99\linewidth]{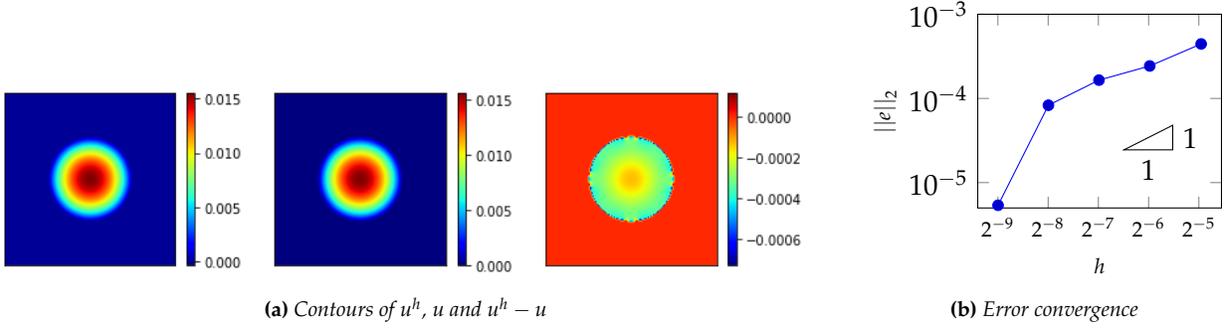}
		\caption{Contours of $ u^h $, $ u $ and $ u^h-u $}
		\label{fig:poisson-disk-contours}
	\end{subfigure}
	\hspace{0.02\linewidth}
    \begin{subfigure}[b]{0.295\linewidth}
		\centering
		\begin{tikzpicture}
		\begin{loglogaxis}[
		width=0.99\linewidth, 
		xlabel={\scriptsize{$h$}}, 
		ylabel={\scriptsize$\norm{e}_2$},
		legend style={at={(0.01,0.99)},anchor=north west,legend columns=1}, 
		x tick label style={rotate=0,anchor=north}, 
		xtick={0.001953125, 0.00390625, 0.0078125, 0.015625 , 0.03125  , 0.0625},
		xticklabels={\scriptsize $2^{-9}$,\scriptsize $2^{-8}$,\scriptsize $2^{-7}$, \scriptsize $2^{-6}$, \scriptsize $2^{-5}$,\scriptsize $2^{-4}$},
		ymin=5e-6,
		ymax=0.001,
		ytick={1e-7,1e-5,1e-4,0.001},
		]
		\addplot table[x expr={\thisrow{h}},y expr={\thisrow{err}},col sep=comma]{Figures/convergence-test/errors.txt};
		\logLogSlopeTriangle{0.8}{0.2}{0.3}{1}{black};
		\end{loglogaxis}
		\end{tikzpicture}
		\caption{Error convergence}
		\label{fig:poisson-disk-convergence}
	\end{subfigure}
	\caption{Error convergence with linear basis ($ \alpha = 1 $) for the Poisson's equation solved on a circular disk of radius $ R=0.25 $.}
	\label{fig:poisson-disk-results}
\end{figure}

We discretize the unit square domain using an $ N\times N $ grid, and the object boundary $ \Gamma_o $ is represented using a point cloud. We optimize the loss function mentioned in \Secref{sec:methodology} to obtain the solution. The results are presented in \Figref{fig:poisson-disk-results}. \Figref{fig:poisson-disk-contours} shows the contours of the discrete solution $ u^h $, the exact solution $ u $ and the error $ e = u^h - u $, for $ N=128 $. \Figref{fig:poisson-disk-convergence} shows the $ L^2 $-norm of the error with mesh size $ h $ in a log-log plot. We see first-order convergence with increasing resolution, which matches established theory from IBM analysis~(\citep{ern2021finite} (Lemma 37.2) and \citep{schillinger2016non}).


\subsection*{Navier-Stokes Equation:}
We also validate our framework against a canonical flow past an airfoil using the steady Navier-Stokes equations (NSE) introduced in \Eqnref{eq:ns-intro}. The flow field output from \ibn{} shows the expected wake structure (at Reynolds number 40), is symmetric about the mid-plane, shows the stagnant pressure point on the upwind side of the immersed object, and satisfies the imposed no-slip condition. 

The boundary conditions for this problem are:
\begin{subequations}
	\begin{align}
		x = 0 &: \ \ u_x = 1 - \left(\frac{2y}{H}-1\right)^2, \ \ u_y = 0 \\
		y = 0 &: \ \ u_x = 0, u_y = 0 \\
		y = H &: \ \ u_x = 0, u_y = 0 \\
		(x,y) \in \Gamma_o &:\ \  u_x = 0, u_y = 0.
	\end{align}
\end{subequations}
\Figref{fig:imdiff-ns-fps-non-network} shows the $x$ velocity, $y$ velocity, and the pressure solution for the Navier Stokes equation using \ibn{} for an aerofoil.

\begin{figure}[t!]
	\centering
	\includegraphics[trim={1.0in 0.0in 1.0in 0.0in},clip,width=0.99\linewidth]{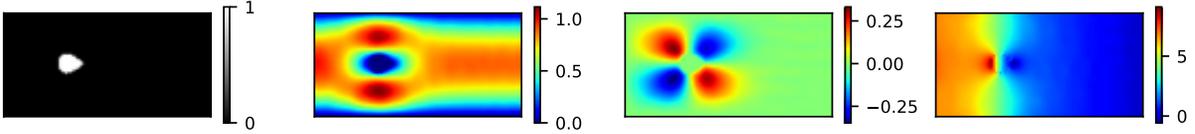}
	\caption{Immersed method with object mask applied to solve the Navier-Stokes equation for a steady flow past a NACA 0012 aerofoil. (left) domain/boundary mask  (middle left) $x$ velocity, (middle right) $y$ velocity, and (right) pressure. }
	\label{fig:imdiff-ns-fps-non-network}
\end{figure}

\subsection*{Parametric Steady State Heat Equation:}
We revisit Poisson's equation for a family of objects characterized by boundary $ \Gamma_o $. Specifically, we consider the object boundary to be taken from a set of parameterized curves. The parametrization here is a NURBS-based one. This is equivalent to having access to the point cloud representing the object. We show results for two distinct distributions of objects. The first example consists of irregularly shaped objects without obvious parametrization, while the second consists of a parametrizable family of one thousand NACA airfoils. \Figref{fig:imdiff-d1d2} shows some anecdotal example predictions of such a trained network. Each example consists of a \textit{single trained IBN}, which is then queried for a set of unseen object boundaries. \ibn{} can accurately predict the field solution under non-trivial geometries, as a comparison with a high-resolution FEM solver confirms (columns 3, 6, 9, 12).  


\begin{figure}[t!]
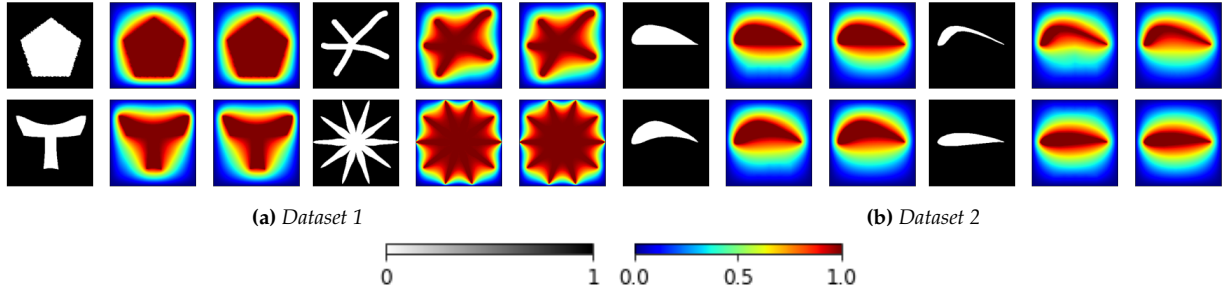

	\centering
	\begin{subfigure}[b!]{0.49\linewidth}
	\includegraphics[trim=63 40 50 40,clip,width=0.49\linewidth]{Figures/imdiff-simple-mask-parametric/set-1/q_6.png}
	\includegraphics[trim=63 40 50 40,clip,width=0.49\linewidth]{Figures/imdiff-simple-mask-parametric/set-1/q_3.png}\\
	\includegraphics[trim=63 40 50 40,clip,width=0.49\linewidth]{Figures/imdiff-simple-mask-parametric/set-1/q_4.png}
	\includegraphics[trim=63 40 50 40,clip,width=0.49\linewidth]{Figures/imdiff-simple-mask-parametric/set-1/q_2.png}
	\caption{Dataset 1}
	\end{subfigure}
	\begin{subfigure}[b!]{0.49\linewidth}
	\includegraphics[trim=63 40 50 40,clip,width=0.49\linewidth]{Figures/imdiff-simple-mask-parametric/set-2/q_5.png}
	\includegraphics[trim=63 40 50 40,clip,width=0.49\linewidth]{Figures/imdiff-simple-mask-parametric/set-2/q_0.png}\\
	\includegraphics[trim=63 40 50 40,clip,width=0.49\linewidth]{Figures/imdiff-simple-mask-parametric/set-2/q_1.png}
	\includegraphics[trim=63 40 50 40,clip,width=0.49\linewidth]{Figures/imdiff-simple-mask-parametric/set-2/q_3.png}
	\caption{Dataset 2}
	\end{subfigure}
	\includegraphics[trim=0 0 0 0,clip,width=0.4\linewidth]{Figures/colorbar.png}
	\caption{\ibn{} applied on two different datasets. Within each panel: the left shows the heat source in white, the middle shows the \ibn{}-predicted temperature distribution, and the right shows the fully converged numerical solution.}
	\label{fig:imdiff-d1d2}
\end{figure}


\subsection*{Comparisons:}
We compare \ibn{} to standard FE-based result for Poisson's equation in 2D and compare it with one of the state-of-the-art methods, PINNs\citep{raissi2019physics}. \Tabref{tab:comparisons} shows comparison of \ibn{} to numerical solution obtained using traditional approaches such as \citep{bangerth2007deal,saurabh2021industrial,griffith2007adaptive,egan2021direct}. Similarly, we show these comparisons for PINNs as well. Our approach is consistently more accurate than PINNs in predicting the field solutions. Also, the training time for PINNs is much higher than the training time for \ibn{}. Since PINNs can deal with only one geometry at a time, we show similar training behavior, but our approach can generalize to more than one geometry.

\begin{table}[h!]
\centering
\small
\setlength\extrarowheight{1pt}
\vspace{-0.1in}
\caption{Comparison of L2 norm of errors incurred by the  proposed IBN method and PINN, respectively when compared with  a baseline numerical solver. and timing for prediction of solution from PINNs and \ibn{} }\label{tab:comparisons}
\begin{tabular}{|l|c|c||c|c|}
\hline
Geometry  & $||u_{PINN} - u_{FEM}||$&$||u_{IBN} - u_{FEM}||$ & \begin{tabular}{c}
Training time  \\
PINN (s)
\end{tabular} & \begin{tabular}{c}
Training time  \\
\ibn{} (s)
\end{tabular} \\
\hline
\hline
2032C     & 0.1638 & 0.0158 & 3667.83 & 329.35\\
A18       & 0.1745 & 0.0156 & 3893.29 & 286.41\\
NACA 0012 & 0.1483 & 0.0158 & 3890.78 & 245.64\\
NACA 0010 & 0.1536 & 0.0157 & 2031.78 & 279.98\\
NACA 0021 & 0.1471 & 0.0160 & 3666.13 & 244.36\\
\hline
\end{tabular}
\end{table}

\subsection*{Limitations:}
One of the major limitations is that compared to other neural PDE solvers, our proposed approach is more memory intensive while being better and faster than them. Further, the two occupancy information calculation approaches have a trade-off. While the winding number-based approach is easier to calculate and faster, they are not robust to noise and require huge memory resources to perform the computation. Similarly, the Eikonal equation-based approach is more robust to noise in the geometry representation and has a smaller memory footprint, but suffers from a long time to compute the SDF accurately and causes initial stability issues.

\section{Conclusions}
We have developed a neural PDE solver that can handle irregularly shaped domains by building on well-established finite element and immersed boundary methods. Our neural PDE solver, coined IBN, demonstrates the ability to predict field solutions for irregular boundaries immersed in the target domain. We highlight two specific PDE cases, Poisson and Navier-Stokes, which show promising results. Alongside the empirical results, we have included theoretical results for the error bounds of the optimization process of our finite element-based loss function. IBN opens up fast design exploration and topology optimization for various societally critical applications such as room ventilation for reduced disease risk, shape design for energy harvesters, and aerodynamic design of vehicles.  

\vfill
\pagebreak
{
\small
\bibliographystyle{unsrtnat}
\bibliography{Bibs}
}

\vfill
\pagebreak


\appendix
\setcounter{page}{1}
\section*{Appendix}

\section{Theory for Generalization Error}\label{theoryA}
In this section, we show in detail the convergence behavior of the generalization error for \ibn{}. For a given surface of the geometry, $\Gamma_o$ and let $u^h$ be the discrete optimum solution to the PDE, evaluated at any point $\mvec{x} \in \Omega$. Note that, $u^h$ need not be the optimum solution to the PDE itself and we denote $u$ as the original optimum solution to the PDE. Hence, $\| u^h - u \|$ is a known error due to the discretization of $\Omega_B$. We will have more discussion about this later in this section.



Now, given the discrete optimum solution $u^h$ predicted by the network $G(\{P\}, \theta)$, we can express the generalization error $e_G$ as follows:
\begin{equation*}\tag{\ref{dynamic_gen_err}}
    \|e_G^k\|=\|u_{\theta_k}-u\| = \|u_{\theta_k}-u_{\theta^*}+u_{\theta^*}-u^h+u^h-u\|
\end{equation*}
where, $u_{\theta_k}$ is the field solutions corresponding to the network parameters $\theta_k$ at $k^{th}$ iteration of the optimization and $u_{\theta^*}$ is the theoretical optimum of $u_{\theta_k}$ that an algorithm can achieve ideally.

To follow the convention in optimization domain, we analyze the quadratic form of $\|e^k_G\|$, i.e., $\|e^k_G\|^2$, and in a stochastic setting, it is equivalent to the second moment, i.e., $\mathbb{E}[\|e^k_G\|^2]$. By applying a fundamental inequality $\|a+b+c\|^2\leq3(\|a\|^2+\|b\|^2+\|c\|^2)$, we can obtain
\begin{equation*}
    \tag{\ref{errorterms}}
    \|e^k_G\|^2\leq 3(\underbrace{\|u_{\theta_k}-u_{\theta^*}\|^2}_{\text{Optimization error}}+\underbrace{\|u_{\theta^*}-u^h\|^2}_{\text{Modeling error}}+\underbrace{\|u^h-u\|^2}_{\text{Discretization error}}).
\end{equation*}

Thus the generalization error is bounded above by three different terms, which are respectively the \textit{optimization error}, \textit{modeling error}, and \textit{discretization error}. The discretization error is only dependent on the choice of the discretization and the modeling error is based on the choice of the network architecture and hence both the errors remains static through out the optimization.

To facilitate the understanding of how $\|e^k_G\|^2$ converges and to simplify the analysis, we will next combine the optimization error and modeling error, while following standard analysis techniques from numerical analysis for the discretization error. To achieve that, the following assumption is required.

\begin{assumption}\label{assumption_1}
There exists a constant $c>0$ such that $\|u_{\theta^*}-u^h\|\leq c \|u_{\theta_k}-u_{\theta^*}\|, \forall k\geq 0$.
\end{assumption}
Assumption~\ref{assumption_1} implies that the modeling error can be upper bounded by the optimization error up to a constant. As the modeling error is determined by two categories of optima, this intuitively makes sense since we can always find some constant $c$ to let the above assumption hold. $c$ could be large when $u_{\theta_k}$ is close to $u_{\theta^*}$. While in practice an approximate solution for $u_{\theta^*}$ is typically sufficient such that $c<\infty$. Thus, we now have
\begin{equation}\tag{\ref{dynamic_gen_err_1}}
    \|e^k_G\|^2\leq 3[(1+c^2)\|u_{\theta_k}-u_{\theta^*}\|^2+\|u^h-u\|^2]. 
\end{equation}
Now the problem is to find the bounds for the two terms on the right hand side of \Eqnref{dynamic_gen_err_1}.

Since the solution function $u^h$ has been parameterized by $\theta$, to obtain the upper bound for the generalization error, we need to establish the relationship between $\|u_{\theta_k}-u_{\theta^*}\|$ and $\|\theta_k-\theta^*\|$. Recall the following equations:
\begin{subequations}
    \begin{alignat}{2}
        &u_{\theta_k}:=u^h(\mathbf{x};\theta_k)=\sum_{i=1}^N\mathcal{B}_i(\mathbf{x})U_i(\theta_k)\\
        &u^*:=u^h(\mathbf{x};\theta^*)=\sum_{i=1}^N\mathcal{B}_i(\mathbf{x})U_i(\theta^*)
    \end{alignat}
\end{subequations}
Using the last two equations yields
\begin{equation}\label{eq_6}
    \begin{split}
        \|u_{\theta_k}-u_{\theta^*}\|&=\|\sum_{i=1}^N\mathcal{B}_i(\mathbf{x})U_i(\theta_k)-\sum_{i=1}^N\mathcal{B}_i(\mathbf{x})U_i(\theta^*)\|\\&=\|\sum_{i=1}^N\mathcal{B}_i(\mathbf{x})(U_i(\theta_k)-U_i(\theta^*))\|\\&\leq \sum_{i=1}^N\|\mathcal{B}_i(\mathbf{x})U_i(\theta_k)-\mathcal{B}_i(\mathbf{x})U_i(\theta^*)\|\\&\leq\sum_{i=1}^N\|\mathcal{B}_i(\mathbf{x})\|\|U_i(\theta_k)-U_i(\theta^*)\|,
    \end{split}
\end{equation}
where the first inequality follows the triangle inequality and the second Cauthy-Schwartz inequality. As the nodal value function $U_i(\cdot)$ is parameterized by the weights $\theta$, in the optimization area, there exists generic assumption on its Lipschitz continuity. One may argue that this may not be practically feasible during implementation, but a regularization can be added to make it true. For simplicity, we impose it in the following directly:
\begin{assumption}\label{assumption_2}
There exists a constant $L>0$ such that for all $x, y\in\mathbb{R}^d$, $\|U_i(x)-U_i(y)\|\leq L\|x-y\|$, for all $i\in\{1,2,...,N\}$.
\end{assumption}
Hence, Eq.~\ref{eq_6} can be rewritten as
\begin{equation}
    \|u_{\theta_k}-u_{\theta^*}\|\leq L\leq\sum_{i=1}^N\|\mathcal{B}_i(\mathbf{x})\|\|\theta_k-\theta^*\|,
\end{equation}
where $\theta^*$ is the theoretical optimum of weights. Combining the last inequality with Eq.~\ref{dynamic_gen_err_1} yields
\begin{equation}
\begin{split}\label{eq_8}
        \|e^k_G\|^2&\leq 3[(1+c^2)L^2(\sum_{i=1}^N\|\mathcal{B}_i(\mathbf{x})\|\|\theta_k-\theta^*\|)^2 + \|u^h-u\|^2]\\&\leq 3[(1+c^2)L^2N\sum_{i=1}^N\|\mathcal{B}_i(\mathbf{x})\|^2\|\theta_k-\theta^*\|^2+\|u^h-u\|^2],
\end{split}
\end{equation}
where the second inequality follows from that $(\sum_{i=1}^N\|a_i\|)^2\leq N\sum^N_{i=1}\|a_i\|^2$. As the basis function $\mathcal{B}_i(\cdot)$ is a bounded function, we denote by $\hat{\mathcal{B}}$ the upper bound over all $i$. Hence, Eq.~\ref{eq_8} is rewritten as
\begin{equation}\label{eq_15}
    \|e^k_G\|^2\leq 3[(1+c^2)L^2N^2\hat{\mathcal{B}}^2\|\theta_k-\theta^*\|^2+\|u^h-u\|^2].
\end{equation}
\begin{remark}
We remark why the stochastic optimization theory can be applied in this context to study the convergence behavior of the \ibn{}. As will be presented in the latter sections, some assumptions will be imposed on the $\mathcal{J}$. However, in the PDE domain, such assumptions may not be practically feasible. It should be noted that though the application focused is PDE, the functions have been \textit{parameterized} such that the property of loss $\mathcal{J}$ is similar as in the machine learning (even deep learning)-based optimization problems. Additionally, the algorithm we leverage to search for the optimal weights are the popular gradient descent type. These motivate us to study the convergence rate in this work, following some similar assumptions adopted in the stochastic optimization literature.
\end{remark}
\begin{remark}
In this work, the analysis for the generalization error is based on the gradient descent type of algorithms. While there are other types of popular algorithms, such as accelerated or adaptive gradient descent, (stochastic) gradient descent remains the most fundamental one. Thus, this is the first step to facilitate the understanding of different errors in the PDE domain. Note, however that, for the implementation, other algorithms can be adopted, and we envision that the (stochastic) gradient descent algorithm may not perform the best empirically. Another aspect we would like to point out is that though due to different algorithms, the explicit error bounds vary, but the convergence rate can still be similar.
\end{remark}

\subsection{An Ideal Bound}
We first investigate the convergence rate when $\mathcal{J}$ is \textit{strongly convex and in a batch setting}. Via Eq.~\ref{eq:combined-loss-winding-num}, it can be seen that the two terms for boundary condition and exterior points are quadratic, which play a role as regularizers, turning the whole loss strongly convex if the first term of PDE loss is convex. While this scenario is typically not practical in the real implementation, it will show us the convergence behavior and an ideal bound that can deliver meaningful insights in terms of optimality and generalization error. To characterize the first main result, we present two well-known lemmas as follows.
\begin{lemma}\label{lemma_1}
If a continuously differentiable function $f:\mathbb{R}^d\to\mathbb{R}$ is $\mu$-strongly convex, for all $x, y\in\mathbb{R}^d$, then
\begin{equation}
    \langle\nabla f(x)-\nabla (y), x-y\rangle \geq \mu\|x-y\|^2.
\end{equation}
\end{lemma}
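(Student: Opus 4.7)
The plan is to derive this inequality directly from the first-order characterization of $\mu$-strong convexity, which is the standard textbook route. Recall that a continuously differentiable $f$ is $\mu$-strongly convex iff for all $x,y\in\mathbb{R}^d$,
\begin{equation*}
f(y)\geq f(x)+\langle\nabla f(x), y-x\rangle+\frac{\mu}{2}\|y-x\|^2.
\end{equation*}
This is the form that most directly links the gradient to a quadratic lower bound and is what we will exploit.

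The key step is to apply the above inequality symmetrically. First I would write the bound at the pair $(x,y)$, then swap the roles of $x$ and $y$ to obtain
\begin{equation*}
f(x)\geq f(y)+\langle\nabla f(y), x-y\rangle+\frac{\mu}{2}\|x-y\|^2.
\end{equation*}
Adding the two inequalities causes the function values $f(x)+f(y)$ to cancel, and the cross terms combine as $\langle\nabla f(x)-\nabla f(y),\, x-y\rangle$ (after flipping a sign from $\langle\nabla f(x), y-x\rangle$). What remains on the right-hand side is $\mu\|x-y\|^2$, which yields the claimed bound.

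There is no real obstacle: the only subtlety is consistency between equivalent definitions of strong convexity. If one prefers the definition via $g(x):=f(x)-\tfrac{\mu}{2}\|x\|^2$ being convex, then the same result follows by invoking monotonicity of $\nabla g$ (a standard consequence of convexity in the $C^1$ setting) and expanding $\langle\nabla g(x)-\nabla g(y), x-y\rangle\geq 0$ to recover the desired inequality. Either route is a two-line argument, so I would simply present the symmetric-addition proof, which is the most self-contained.
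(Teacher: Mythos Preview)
Your proof is correct and is exactly the standard symmetric-addition argument from the first-order characterization of strong convexity. The paper does not actually supply a proof of this lemma; it simply cites it as a well-known result, so there is nothing to compare against beyond noting that your argument is the expected textbook route.
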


\begin{lemma}\label{lemma_2}
If a continuously differentiable function $f:\mathbb{R}^d\to\mathbb{R}$ is $\mu$-strongly convex and $\beta$-smooth, for all $x, y\in\mathbb{R}^d$, then
\begin{equation}
    \langle\nabla f(x)-\nabla (y), x-y\rangle \geq 
    \frac{\mu\beta}{\mu+\beta}\|x-y\|^2 + \frac{1}{\mu+\beta}\|\nabla f(x)-\nabla f(y)\|^2.
\end{equation}
\end{lemma}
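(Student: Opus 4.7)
The plan is to use the standard auxiliary-function trick that reduces a $\mu$-strongly convex, $\beta$-smooth function to a convex, $(\beta-\mu)$-smooth one, and then invoke the Baillon--Haddad co-coercivity inequality (which is the $\mu=0$ case of Lemma~\ref{lemma_2} and can itself be derived from the co-coercivity lemma stated implicitly between Lemmas~\ref{lemma_1} and~\ref{lemma_2} in standard references). Concretely, I would define
\begin{equation*}
\phi(x) := f(x) - \tfrac{\mu}{2}\|x\|^{2},
\end{equation*}
so that $\nabla\phi(x)=\nabla f(x)-\mu x$. Because $f$ is $\mu$-strongly convex, $\phi$ is convex; because $f$ is $\beta$-smooth, $\phi$ has $\nabla\phi$ Lipschitz with constant $\beta-\mu$ (this is a direct computation: for any $x,y$, $\|\nabla\phi(x)-\nabla\phi(y)\|\le \|\nabla f(x)-\nabla f(y)\|+\mu\|x-y\|$ combined with the stronger two-sided bound from $\mu$-strong convexity plus $\beta$-smoothness).

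Next I would apply co-coercivity to $\phi$: for any convex $L$-smooth function, $\langle\nabla\phi(x)-\nabla\phi(y),x-y\rangle\ge\tfrac{1}{L}\|\nabla\phi(x)-\nabla\phi(y)\|^{2}$ with $L=\beta-\mu$. Substituting $\nabla\phi=\nabla f-\mu\,\mathrm{id}$ gives on the left
\begin{equation*}
\langle\nabla f(x)-\nabla f(y),x-y\rangle - \mu\|x-y\|^{2},
\end{equation*}
and on the right, after expanding the square,
\begin{equation*}
\tfrac{1}{\beta-\mu}\bigl(\|\nabla f(x)-\nabla f(y)\|^{2} - 2\mu\langle\nabla f(x)-\nabla f(y),x-y\rangle + \mu^{2}\|x-y\|^{2}\bigr).
\end{equation*}
Multiplying through by $\beta-\mu$, grouping the $\langle\nabla f(x)-\nabla f(y),x-y\rangle$ terms on the left and the $\|x-y\|^{2}$ terms on the right, the coefficient of the inner product becomes $\beta+\mu$ and the coefficient of $\|x-y\|^{2}$ becomes $\mu\beta$, yielding
\begin{equation*}
(\beta+\mu)\langle\nabla f(x)-\nabla f(y),x-y\rangle \ge \|\nabla f(x)-\nabla f(y)\|^{2} + \mu\beta\|x-y\|^{2},
\end{equation*}
which is exactly the claim after dividing by $\mu+\beta$.

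The only genuine obstacle is the degenerate case $\mu=\beta$, in which the reduction step divides by zero. In this case $f$ must be a quadratic of the form $\tfrac{\mu}{2}\|x\|^{2}+\langle a,x\rangle+b$, so $\nabla f(x)-\nabla f(y)=\mu(x-y)$, and both sides of the inequality equal $\mu\|x-y\|^{2}$; the bound holds with equality and can also be recovered by taking $\beta\downarrow\mu$. A tidy alternative that avoids the case split is to invoke the ``three-point'' characterization of $\mu$-strong convexity plus $\beta$-smoothness (writing both properties as quadratic upper/lower bounds on $f(y)-f(x)-\langle\nabla f(x),y-x\rangle$) and add the two resulting inequalities after swapping the roles of $x$ and $y$; this is conceptually cleaner but involves slightly more bookkeeping. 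I would favor the auxiliary-function route since Lemma~\ref{lemma_1} is already stated and co-coercivity is a standard one-line consequence.
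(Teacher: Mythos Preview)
Your proof is correct and is in fact the standard textbook derivation (this is essentially Theorem 2.1.12 in Nesterov's \emph{Introductory Lectures on Convex Optimization}). The paper itself does not supply a proof: Lemma~\ref{lemma_2} is introduced as one of ``two well-known lemmas'' and simply stated without argument, so there is no authorial proof to compare against. Your handling of the degenerate case $\mu=\beta$ is also appropriate, though one small wording issue: your justification that $\phi$ is $(\beta-\mu)$-smooth via the triangle inequality on $\|\nabla\phi(x)-\nabla\phi(y)\|$ is not quite the right mechanism; the clean route is to use that for a convex function, $L$-smoothness is equivalent to $\langle\nabla\phi(x)-\nabla\phi(y),x-y\rangle\le L\|x-y\|^2$, and this follows immediately from $\beta$-smoothness of $f$ minus the $\mu\|x-y\|^2$ term.
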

With these two lemma in hand, we are now ready to present a key auxiliary lemma.
\begin{lemma}\label{lemma_3}
Suppose that $\mathcal{J}$ is $\mu$-strongly convex and $\beta$-smooth. By applying the gradient descent algorithm with a constant step size $\eta_k=\eta=\frac{2}{\mu+\beta}$, the iterates $\{\theta_k\}$ satisfy the following relationship
\begin{equation}\label{eq_18}
    \|\theta_K-\theta^*\|^2\leq \bigg(1-\frac{2}{\kappa+1}\bigg)^{2K}\|\theta_0-\theta^*\|^2,
\end{equation}
where $\kappa=\frac{\beta}{\mu}\geq 1$ and $K$ is the number of epochs.
\end{lemma}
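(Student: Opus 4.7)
The plan is to derive a one-step contraction inequality for $\|\theta_{k+1} - \theta^*\|^2$ and then iterate it $K$ times. Since $\mathcal{J}$ is $\mu$-strongly convex, it has a unique minimizer $\theta^*$ at which $\nabla \mathcal{J}(\theta^*) = 0$. Writing out one gradient descent step as $\theta_{k+1} = \theta_k - \eta \nabla \mathcal{J}(\theta_k)$ and expanding the squared Euclidean norm gives
\begin{equation*}
\|\theta_{k+1} - \theta^*\|^2 = \|\theta_k - \theta^*\|^2 - 2\eta \langle \nabla \mathcal{J}(\theta_k), \theta_k - \theta^* \rangle + \eta^2 \|\nabla \mathcal{J}(\theta_k)\|^2.
\end{equation*}

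Next, I would invoke Lemma~\ref{lemma_2} (the co-coercivity property for $\mu$-strongly convex, $\beta$-smooth functions) with $x = \theta_k$ and $y = \theta^*$, using $\nabla \mathcal{J}(\theta^*) = 0$, to obtain
\begin{equation*}
\langle \nabla \mathcal{J}(\theta_k), \theta_k - \theta^* \rangle \ge \frac{\mu\beta}{\mu+\beta}\|\theta_k - \theta^*\|^2 + \frac{1}{\mu+\beta}\|\nabla \mathcal{J}(\theta_k)\|^2.
\end{equation*}
Substituting this into the expansion and choosing $\eta = 2/(\mu+\beta)$, the coefficient of $\|\nabla \mathcal{J}(\theta_k)\|^2$ becomes $\eta^2 - 2\eta/(\mu+\beta) = 0$, so the gradient-norm term cancels exactly. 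This is precisely why the particular step size $2/(\mu+\beta)$ is chosen; it balances the two terms in the co-coercivity inequality.

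What remains is a pure contraction in $\|\theta_k - \theta^*\|^2$, namely
\begin{equation*}
\|\theta_{k+1} - \theta^*\|^2 \le \Bigl(1 - \tfrac{4\mu\beta}{(\mu+\beta)^2}\Bigr)\|\theta_k - \theta^*\|^2.
\end{equation*}
The final algebraic step is to rewrite the contraction factor as $(\beta-\mu)^2/(\mu+\beta)^2 = ((\kappa-1)/(\kappa+1))^2 = (1 - 2/(\kappa+1))^2$, using $\kappa = \beta/\mu$. Iterating the contraction $K$ times then produces the claimed bound. The main obstacle is not technical difficulty, since this is a classical argument, but rather getting the algebraic simplification into the form $(1 - 2/(\kappa+1))^{2K}$ cleanly and verifying that the chosen step size is exactly what annihilates the gradient-norm term.
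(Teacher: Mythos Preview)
Your proposal is correct and follows essentially the same route as the paper: expand $\|\theta_{k+1}-\theta^*\|^2$, apply Lemma~\ref{lemma_2} at $(\theta_k,\theta^*)$ using $\nabla\mathcal{J}(\theta^*)=0$, observe that $\eta=\tfrac{2}{\mu+\beta}$ makes the coefficient $\eta^2-\tfrac{2\eta}{\mu+\beta}$ of the gradient-norm term vanish, and then rewrite $1-\tfrac{4\mu\beta}{(\mu+\beta)^2}=\big(\tfrac{\kappa-1}{\kappa+1}\big)^2=\big(1-\tfrac{2}{\kappa+1}\big)^2$ before iterating. Your write-up is in fact slightly more explicit about the algebra than the paper's version.
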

\begin{proof}
Based on the gradient descent update, we have
\begin{equation}
\begin{split}
    \|\theta_k-\theta^*\|^2 &= \|\theta_{k-1}-\eta\nabla \mathcal{J}(\theta_{k-1})-\theta^*\|^2\\&=\|\theta_{k-1}-\theta^*\|^2 - 2\eta\langle\nabla \mathcal{J}(\theta_{k-1}), \theta_{k-1}-\theta^*\rangle+\eta^2\|\nabla \mathcal{J}(\theta_{k-1})\|^2\\&\leq \bigg(1-2\frac{\mu\beta\eta}{\mu+\beta}\bigg)\|\theta_{k-1}-\theta^*\|^2+\bigg(\eta^2-\frac{2\gamma}{\mu+\beta}\bigg)\|\nabla \mathcal{J}(\theta_{k-1})\|^2\\&=\bigg(1-\frac{2}{\kappa+1}\bigg)^2\|\theta_{k-1}-\theta^*\|^2\\&\leq\bigg(1-\frac{2}{\kappa+1}\bigg)^{2k}\|\theta_0-\theta^*\|^2.
\end{split}
\end{equation}
The first inequality follows from the substitution of the step size and Lemma~\ref{lemma_2}. While the third equality is due to $\eta^2-\frac{2\gamma}{\mu+\beta} = 0$. The desirable result is obtained by changing $k$ to $K$.
\end{proof}

Lemma~\ref{lemma_3} implies the upper bound of $\|\theta_K-\theta^*\|^2$ and shows the \textit{linear} convergence rate with a constant $\rho_1=\bigg(1-\frac{2}{\kappa+1}\bigg)^2$. Intuitively, the convergence to $\theta^*$ is quite fast based on the above result, particularly when the initialization $\theta_0$ is close to $\theta^*$. That said, if the loss induced by the \ibn{} is approximately strongly convex, and smooth, the optimization error decays in a linear rate in a batch setting, which is summarized in the following. Before that, we present another lemma that shows the upper bound of the discretization error $\|u^h-u\|^2$.
\begin{lemma}\label{lemma_4}
Assume that the basis function $\mathcal{B}_i(\mathbf{x})$ are chosen such that they are at least continuously differentiable locally over a mesh. Then the following relationship holds true:
\begin{equation}\label{eq_19}
    \|u^h-u\|^2\leq Ch^{2\alpha},
\end{equation}
where $C>0$ and $\alpha$ is the order of continuous derivative. Typically, $\alpha\geq 1$.
\end{lemma}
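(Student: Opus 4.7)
The plan is to follow the classical two-step recipe of finite element a priori error analysis: (i) establish that the discrete solution $u^h$ is quasi-optimal within $V^h$ with respect to an appropriate norm, and (ii) bound the best approximation error using standard polynomial interpolation estimates. The immersed character of the discretization means I cannot appeal to a textbook Céa-type argument directly, so the first step must accommodate the weak (Nitsche-style) boundary treatment already built into the Galerkin residual \Eqnref{eq:galerkin-weak-form-abstract}.

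First I would recast \Eqnref{eq:galerkin-weak-form-abstract} as a bilinear form $a(u^h,v^h) = \ell(v^h)$ on $V^h$, where $a(\cdot,\cdot)$ bundles together the volumetric PDE contribution and the boundary penalty terms with stabilization parameter $\lambda$. For $\lambda$ chosen sufficiently large (scaling like $1/h$, consistent with the Nitsche choice noted after \Eqnref{eq:combined-loss-winding-num} and with \citep{ern2021finite,schillinger2016non}), standard arguments give both continuity and a discrete coercivity estimate of $a(\cdot,\cdot)$ on $V^h$ with respect to a mesh-dependent energy norm $\|\cdot\|_h$. Combining consistency (the true solution $u$ satisfies the same weak identity up to data) with coercivity and continuity yields Strang-type quasi-optimality of the form
\begin{equation}
\|u^h - u\|_h \;\leq\; C_1 \inf_{v^h \in V^h} \|v^h - u\|_h,
\end{equation}
which is the correct analogue of Céa's lemma in the immersed setting.

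Next I would select $v^h = \Pi_h u$, the standard nodal (or Clément/Scott--Zhang) interpolant of $u$ in $V^h$. Under the hypothesis that $\{\mathcal{B}_i\}$ span polynomials of degree $m \geq \alpha$ locally and that $u$ has sufficient Sobolev regularity (say $u \in H^{\alpha+1}$ on each element patch), the Bramble--Hilbert lemma gives the elementwise bound $\|u - \Pi_h u\|_{H^s(K)} \leq C\, h^{\alpha+1-s} |u|_{H^{\alpha+1}(K)}$. Summing over elements and inserting into the quasi-optimality estimate, then passing from the mesh norm $\|\cdot\|_h$ to the $L^2$ norm used in \Eqnref{errorterms}, yields $\|u^h - u\| \leq C^{1/2} h^{\alpha}$; squaring gives the stated bound $\|u^h - u\|^2 \leq C h^{2\alpha}$, with $C$ depending on $|u|_{H^{\alpha+1}}$, the shape regularity of $\mesh$, and the Nitsche constants.

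The main obstacle is the immersed geometry: the background mesh $\mesh$ does not conform to $\Gamma_o$, so the element patches cut by $\Gamma_o$ have degraded approximation properties, and the boundary penalty terms must be controlled using a discrete trace inequality whose constant can blow up on sliver intersections. This is exactly the technical point addressed by Lemma~37.2 of \citep{ern2021finite} and by the analysis in \citep{schillinger2016non}; invoking their ghost-penalty or cut-cell stabilization arguments is what lets the continuity and coercivity constants remain $h$-uniform, and therefore what makes the exponent $\alpha$ in \Eqnref{eq_19} exactly the one advertised (rather than a reduced rate). Once those ingredients are cited, the remainder of the proof is bookkeeping through the interpolation estimate.
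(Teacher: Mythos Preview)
Your proposal is correct and in fact considerably more detailed than the paper's own treatment: the paper does not give an independent proof of this lemma but simply states that ``the proof follows an analysis similar to that presented in~\citep{ern2021finite,main2018shifted,hansbo2002unfitted}.'' Your outline (Nitsche-type quasi-optimality plus Bramble--Hilbert interpolation, with cut-cell stabilization cited from \citep{ern2021finite,schillinger2016non}) is precisely the argument those references contain, so you are aligned with the paper's approach.
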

The proof follows an analysis similar to that presented in ~\citep{ern2021finite,main2018shifted,hansbo2002unfitted}. Lemma~\ref{lemma_4} states the static property of the discretization error, and implies that the decaying of the error is bounded by the second order of the resolution $h$. This observation is critical as the number of basis functions $N$ has a relationship with $h$ such that a tradeoff between the optimization error and discretization error can be found. More detail will be shown in the latter analysis. We are now ready to present the first main result.
\begin{theorem}\label{theorem_1}
Suppose that $\mathcal{J}$ is $\mu$-strongly convex and $\beta$-smooth. Let Assumptions~\ref{assumption_1} and~\ref{assumption_2} hold. By appling the gradient descent algorithm with a constant step size $\eta_k=\eta=\frac{2}{\mu+\beta}$, the generalization error $\|e^K_G\|^2$ satisfies the following relationship after $K$ epochs,
\begin{equation}
    \|e^K_G\|^2\leq 3(1+c^2)L^2N^2\hat{\mathcal{B}}^2\bigg(1-\frac{2}{\kappa+1}\bigg)^{2K}\|\theta_0-\theta^*\|^2+3Ch^{2\alpha}.
\end{equation}
\end{theorem}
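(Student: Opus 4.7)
The plan is to assemble the bound by chaining the three ingredients already developed in the excerpt: the error decomposition, the parameter-to-function Lipschitz translation, and the standard gradient-descent contraction for strongly convex, smooth objectives. Essentially all the heavy lifting has been set up as lemmas, and the proof of Theorem~\ref{theorem_1_main} reduces to substitution.

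First I would start from the decomposition in \Eqnref{errorterms} and invoke Assumption~\ref{assumption_1} to fold the modeling error into the optimization error, reproducing \Eqnref{dynamic_gen_err_1}. Next I would translate the function-space optimization error $\|u_{\theta_K}-u_{\theta^*}\|$ into a parameter-space error: expanding $u_{\theta_K}$ and $u_{\theta^*}$ in the FEM basis as in \Eqnref{eq:def:fem-function-approximation-0}, applying the triangle and Cauchy--Schwarz inequalities as in Eq.~(\ref{eq_6}), and then using Assumption~\ref{assumption_2} to push the Lipschitz constant $L$ out, followed by the elementary bound $(\sum_i \|a_i\|)^2 \le N\sum_i \|a_i\|^2$ and the uniform basis bound $\|\mathcal{B}_i\|\le \hat{\mathcal{B}}$. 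This yields the intermediate inequality \Eqnref{eq_15}, that is,
\begin{equation*}
\|e^K_G\|^2 \leq 3\bigl[(1+c^2)L^2 N^2 \hat{\mathcal{B}}^2 \|\theta_K-\theta^*\|^2 + \|u^h-u\|^2\bigr].
\end{equation*}

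Then I would bound each of the two remaining terms separately. For the parameter iterate error, I would invoke Lemma~\ref{lemma_3} (the standard co-coercivity argument for $\mu$-strongly convex, $\beta$-smooth functions under gradient descent with step size $\eta=\tfrac{2}{\mu+\beta}$), which gives the linear contraction $\|\theta_K-\theta^*\|^2 \le (1-\tfrac{2}{\kappa+1})^{2K}\|\theta_0-\theta^*\|^2$ with $\kappa=\beta/\mu$. For the discretization error, I would apply Lemma~\ref{lemma_4} to obtain $\|u^h-u\|^2 \le C h^{2\alpha}$. Substituting both bounds into the displayed inequality and combining gives the claimed
\begin{equation*}
\|e^K_G\|^2 \leq 3(1+c^2)L^2 N^2 \hat{\mathcal{B}}^2 \Bigl(1-\tfrac{2}{\kappa+1}\Bigr)^{2K}\|\theta_0-\theta^*\|^2 + 3Ch^{2\alpha}.
\end{equation*}

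I do not anticipate any genuinely hard step in the proof itself, since the technical content is already isolated in the supporting lemmas. The only delicate point worth flagging is the use of Assumption~\ref{assumption_1}: this is what lets us treat the modeling error as an optimization-error term and thereby collapse three quantities in \Eqnref{errorterms} into a single optimization-plus-discretization bound; if $c$ were allowed to grow with $k$ this reduction would fail. A secondary subtlety is that the basis-bound argument tacitly requires $\hat{\mathcal{B}} < \infty$ uniformly in $N$, which is legitimate for standard nodal FEM bases but should be acknowledged. Beyond these bookkeeping remarks the result is a clean composition of the lemmas.
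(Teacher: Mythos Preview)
Your proposal is correct and matches the paper's own proof essentially line for line: the paper simply states that the result follows by combining Eq.~\ref{eq_15}, Eq.~\ref{eq_18} (Lemma~\ref{lemma_3}), and Eq.~\ref{eq_19} (Lemma~\ref{lemma_4}), which is exactly the substitution you describe. Your additional remarks on the role of Assumption~\ref{assumption_1} and the uniform basis bound $\hat{\mathcal{B}}$ are not in the paper but are accurate observations.
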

\begin{proof}
The desirable result can immediately be obtained by combining Eqs.~\ref{eq_15},~\ref{eq_18}, and~\ref{eq_19}.
\end{proof}
Theorem~\ref{theorem_1} implies that the generalization error of \ibn{} is upper bounded by two terms, while the first one is tightly related to the optimization error caused by numerically searching for the optimal weights and the second one shows the static error due to the discretization. Additionally, in an asymptotic manner, we have that
\begin{equation}
    \lim_{K\to\infty}\|e^K_G\|^2\leq 3Ch^{2\alpha},
\end{equation}
which suggests that the generalization error will ultimately dominated by the discretization error determined by the resolution $h$ and the order $\alpha$, both of which rely on the definition of basis functions $\{\mathcal{B}_i(\mathbf{x})\}_{i=1}^N$. According to the domain knowledge~\citep{larson2013finite}, the resolution $h$ is a function of the number of basis functions, resulting in the $h\propto\frac{1}{\sqrt{N}}$ (due to the 2D analysis in this context). With this relationship, if $h$ is chosen such that $h=\mathcal{O}(\frac{1}{\sqrt{KN}})$, the following corollary can be obtained:
\begin{corollary}\label{coro_1}
Suppose that $h=\mathcal{O}(\frac{1}{\sqrt{KN}})$. Given all conditions and parameters from Theorem~\ref{theorem_1}, the generalization error achieves an overall \textit{sublinear} convergence rate
\begin{equation}
    \|e^K_G\|^2\leq\mathcal{O}\bigg(N^2\rho_1^K+\frac{1}{(NK)^{\alpha}}\bigg).
\end{equation}
\end{corollary}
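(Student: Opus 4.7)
\textbf{Proof proposal for Corollary~\ref{coro_1}.} The plan is to simply substitute the prescribed scaling $h=\mathcal{O}(1/\sqrt{KN})$ into the two-term bound from Theorem~\ref{theorem_1} and absorb all problem-dependent constants (the $c$, $L$, $\hat{\mathcal{B}}$, $C$, $\|\theta_0-\theta^*\|$) into Big-$\mathcal{O}$ notation. The work is essentially algebraic: I do not expect any genuinely new estimate to be needed beyond the content already established by Lemmas~\ref{lemma_3} and~\ref{lemma_4}. The structural reason the bound takes the claimed form is that the two sources of error behave very differently in $K$: the optimization/modeling term decays geometrically (in $K$) but scales as $N^2$, whereas the discretization term decays only polynomially (in the product $NK$). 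Balancing the two through the choice of $h$ yields the overall sublinear rate.

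First I would recall the bound of Theorem~\ref{theorem_1},
\begin{equation*}
\|e^K_G\|^2 \le 3(1+c^2)L^2 N^2 \hat{\mathcal{B}}^2\Bigl(1-\tfrac{2}{\kappa+1}\Bigr)^{2K}\|\theta_0-\theta^*\|^2 + 3Ch^{2\alpha},
\end{equation*}
and set $\rho_1 := \bigl(1-\tfrac{2}{\kappa+1}\bigr)^2 \in (0,1)$, which is a valid constant since $\kappa \ge 1$. The first term is then exactly $\mathcal{O}(N^2 \rho_1^K)$, because all of $c$, $L$, $\hat{\mathcal{B}}$ and $\|\theta_0-\theta^*\|^2$ are $K$- and $N$-independent constants once the problem is fixed.

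Next I would substitute the hypothesized discretization scaling $h = \mathcal{O}(1/\sqrt{KN})$ into the second term, giving $3Ch^{2\alpha} = 3C\,\mathcal{O}\bigl((KN)^{-\alpha}\bigr) = \mathcal{O}(1/(NK)^{\alpha})$. Adding the two contributions yields the claimed bound
\begin{equation*}
\|e^K_G\|^2 \le \mathcal{O}\!\left(N^2 \rho_1^K + \frac{1}{(NK)^{\alpha}}\right).
\end{equation*}
A brief remark should accompany the calculation to justify why $h \propto 1/\sqrt{N}$ is the natural 2D scaling (as noted in the remark before the corollary, citing~\citep{larson2013finite}), so that the extra factor $1/\sqrt{K}$ encoded in the hypothesis is the free parameter linking mesh resolution to training budget.

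The only non-routine point I anticipate is the qualitative claim that the overall rate is \emph{sublinear}: even though the first summand decays geometrically, the polynomial discretization term $1/(NK)^{\alpha}$ dominates asymptotically in $K$, so the overall convergence rate inherits the weaker, polynomial behavior. I would make this dominance explicit by observing that, for any fixed $N$, there exists $K_0$ such that $N^2 \rho_1^K \le (NK)^{-\alpha}$ for all $K \ge K_0$ (since exponential decay beats any polynomial), and therefore the second term governs the bound for large $K$. This is the main conceptual obstacle; everything else is bookkeeping.
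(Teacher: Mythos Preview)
Your proposal is correct and follows essentially the same approach as the paper: the corollary is obtained immediately by substituting $h=\mathcal{O}(1/\sqrt{KN})$ into the two-term bound of Theorem~\ref{theorem_1}, identifying $\rho_1=(1-2/(\kappa+1))^2$, and absorbing the fixed constants into the Big-$\mathcal{O}$. Your added remark justifying the qualifier ``sublinear'' via dominance of the polynomial term $(NK)^{-\alpha}$ over the geometric term $N^2\rho_1^K$ for large $K$ is a useful clarification that the paper leaves implicit.
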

A quick observation is that the generalization error depends primarily on the number of basis functions $N$ and the number of epochs $K$. Though Corollary~\ref{coro_1} implies a satisfactory convergence rate, it requires strong conditions such as batch setting and strongly convex loss. While, surprisingly, with an explicit relationship between $h$ and $N$ in the error bound, it shows a clear \textit{tradeoff} between the optimization error and discretization error such that a smaller $N$ can leads to an easier optimization procedure but causing larger discretization error. Such a result can also help the practitioners in the practical design for selecting the (nearly) optimal $N$ or $h$. In the following, we will make attempts to relax the conditions for either the setting or loss function and verify if such a tradeoff is still existing.
\subsection{Stochastic Setting \& Polyak-Lojasiewicz Condition}
In a batch setting, the computational overhead could be an issue when the number of data points is significantly large. Thus, stochastic gradient descent (SGD)~\citep{bottou2012stochastic} has been one of the most popular optimization algorithms. Further, the loss functions associated with most real-world problems are not necessarily strongly convex. In Eq.~\ref{eq:combined-loss-winding-num}, the loss consists of three terms and the last two terms are strongly convex, while the property of the first term can determine the property of the whole loss. In this section, we present the theoretical analysis for the generalization error $e^k_G$ with the stochastic setting, and the loss satisfies a relaxed condition, termed \textit{Polyak-Lojasiewicz (PL) condition}, which complies more with the practical implementation and has been provable shown to hold across most of the parameter space in over-parameterized networks~\citep{liu2022loss}. It has been acknowledged that the strong convexity implies PL condition, but not vice versa, as the loss can possibly be \textit{non-convex}.
When applying SGD, the sampling technique is adopted for calculating the gradients such that they are \textit{noisy}. Popular random sampling techniques such as uniform sampling $\mathcal{U}(\{1, ..., N_s\})$ have been used widely. This motivates us to define a $\sigma$-field for each time step $k$, i.e., $\mathcal{F}_k=\sigma(I_0,...,I_k)$, where $\mathcal{F}_{-1}=\{\Omega, \emptyset\}$, and $I_k\sim\mathcal{U}(\{1, ..., N_s\})$ indicates each realization of random sampling. It is noted that $\theta_k$ is $\mathcal{F}_{k-1}$-measurable, i.e., $\theta_k$ only depends on $I_0,...,I_{k-1}$. Based on the defined setting here, one well-known result is that a stochastic gradient is an independent and unbiased estimate of the gradient $\mathbb{E}[\nabla\mathcal{J}_{I_k}(\theta_k)|\mathcal{F}_{k-1}]=\nabla \mathcal{J}(\theta_k)$. Due to the stochasticity, Eq.~\ref{eq_15} is now rewritten by taking expectation on both sides
\begin{equation}\label{eq_25}
    \mathbb{E}[\|e^k_G\|^2|\mathcal{F}_{k-1}]\leq 3[(1+c^2)L^2N^2\hat{\mathcal{B}}^2\mathbb{E}[\|\theta_k-\theta^*\|^2|\mathcal{F}_{k-1}]+\|u^h-u\|^2].
\end{equation}
Similarly, to obtain the upper bound for $\mathbb{E}[\|e^k_G\|^2|\mathcal{F}_{k-1}]$, the key is to investigate the upper bound of $\mathbb{E}[\|\theta_k-\theta^*\|^2|\mathcal{F}_{k-1}]$, which is shown in the next auxiliary lemma, adapted from~\citep{karimi2016linear}. We first present the PL inequality in the following.
\begin{equation}
    \frac{1}{2}\|\nabla \mathcal{J}(\theta)\|^2\geq \mu(\mathcal{J}(\theta)-\mathcal{J}(\theta^*)), \forall \theta.
\end{equation}
Intuitively, this inequality suggests that the gradient grows faster than a quadratic function as it moves away from the optimal function value. Also, every stationary point is a global optimum when the loss satisfies this condition. This differs from the strong convexity, which implies a unique solution.
\begin{lemma}\label{lemma_5}
Let $(\mathcal{F}_k)_k$ be an increasing family of $\sigma$-fields. For each $k\geq 0$, suppose that $\mathcal{J}$ is a $\beta$-smooth, differentiable, and $\mathcal{F}_k$-measurable function over $\Theta$. Also, assume that $\mathbb{E}[\nabla\mathcal{J}_{I_k}(\theta_k)|\mathcal{F}_{k-1}]=\nabla \mathcal{J}(\theta_k)$, for each $\theta\in\Theta$ and $k\geq 1$, where $\mathcal{J}$ satisfies the $\mu$-PL condition. If $\forall k\geq 0$, $\mathbb{E}[\|\nabla \mathcal{J}_{I_k}(\theta_k)\|^2|\mathcal{F}_{k-1}]\leq \sigma^2$, then the iterates $\{\theta_k\}$ satisfy the following relationship after $K$ epochs,
\begin{equation}\label{eq_27}
    \mathbb{E}[\|\theta_K-\theta^*\|^2]\leq (1-2\mu\eta)^{K-1}\frac{2[\mathcal{J}(\theta_0)-\mathcal{J}(\theta^*)]}{\mu}+\frac{\beta\sigma^2\eta}{2\mu^2},
\end{equation}
when the step size $\eta_k=\eta<\frac{1}{2\mu}$.
\end{lemma}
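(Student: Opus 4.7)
The plan is to follow the standard recipe for analyzing stochastic gradient descent under the Polyak--Lojasiewicz (PL) condition: derive a one-step recursion on the expected suboptimality in function value, unroll it across $K$ epochs, and only at the very end translate the resulting function-value bound into the required bound on $\mathbb{E}[\|\theta_K-\theta^*\|^2]$ using a quadratic-growth consequence of PL. The three pillars of the argument are $\beta$-smoothness, the unbiasedness and bounded-second-moment assumptions on the stochastic gradient, and the $\mu$-PL inequality $\tfrac{1}{2}\|\nabla\mathcal{J}(\theta)\|^2\ge \mu(\mathcal{J}(\theta)-\mathcal{J}(\theta^*))$ introduced just before the lemma.

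First I would apply the $\beta$-smoothness descent inequality to the SGD update $\theta_{k+1}=\theta_k-\eta\nabla\mathcal{J}_{I_k}(\theta_k)$ and take conditional expectation with respect to $\mathcal{F}_{k-1}$. Using $\mathbb{E}[\nabla\mathcal{J}_{I_k}(\theta_k)\mid\mathcal{F}_{k-1}]=\nabla\mathcal{J}(\theta_k)$ for the linear term and the second-moment bound $\sigma^2$ for the quadratic term gives
\begin{equation*}
\mathbb{E}[\mathcal{J}(\theta_{k+1})\mid\mathcal{F}_{k-1}]-\mathcal{J}(\theta^*)\le \mathcal{J}(\theta_k)-\mathcal{J}(\theta^*)-\eta\|\nabla\mathcal{J}(\theta_k)\|^2+\tfrac{\beta\eta^2\sigma^2}{2}.
\end{equation*}
Injecting PL into the gradient-norm term then yields the contractive recursion
\begin{equation*}
\mathbb{E}[\mathcal{J}(\theta_{k+1})-\mathcal{J}(\theta^*)\mid\mathcal{F}_{k-1}]\le (1-2\mu\eta)\bigl(\mathcal{J}(\theta_k)-\mathcal{J}(\theta^*)\bigr)+\tfrac{\beta\eta^2\sigma^2}{2},
\end{equation*}
where the hypothesis $\eta<1/(2\mu)$ keeps the contraction factor in $(0,1)$.

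Taking total expectation and unrolling the recursion from $0$ to $K-1$ produces the homogeneous contraction $(1-2\mu\eta)^{K}\bigl(\mathcal{J}(\theta_0)-\mathcal{J}(\theta^*)\bigr)$ together with an inhomogeneous geometric tail $\tfrac{\beta\eta^2\sigma^2}{2}\sum_{k=0}^{K-1}(1-2\mu\eta)^k$, which I would bound by $\tfrac{\beta\eta\sigma^2}{4\mu}$ using the standard closed form for the geometric series. This gives the intermediate function-value estimate $\mathbb{E}[\mathcal{J}(\theta_K)-\mathcal{J}(\theta^*)]\le (1-2\mu\eta)^{K}(\mathcal{J}(\theta_0)-\mathcal{J}(\theta^*))+\tfrac{\beta\eta\sigma^2}{4\mu}$. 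Finally, to reach the bound on $\mathbb{E}[\|\theta_K-\theta^*\|^2]$ I would invoke the quadratic-growth inequality $\|\theta-\theta^*\|^2\le \tfrac{2}{\mu}(\mathcal{J}(\theta)-\mathcal{J}(\theta^*))$, which is a known consequence of the PL condition (Karimi--Nutini--Schmidt), multiply the function-value estimate by $2/\mu$, and absorb the cosmetic $K$ vs $K-1$ mismatch in the exponent into the monotonicity of $(1-2\mu\eta)^{\bullet}$.

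The main obstacle is the last step: PL by itself controls $\|\nabla\mathcal{J}\|^2$ in terms of the function gap, not $\|\theta-\theta^*\|^2$, so the translation requires either invoking the PL$\Rightarrow$quadratic-growth implication with the correct constant $2/\mu$ or disambiguating $\theta^*$ as the projection of $\theta_K$ onto the optimal set when the minimizer is not unique. Verifying that the basis-parameterized loss $\mathcal{J}$ considered in the paper supports this translation (which is non-trivial for non-convex losses) is the subtle part; the rest of the argument is a mechanical unrolling in which all constants can be tracked exactly.
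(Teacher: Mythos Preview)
Your proposal is correct and follows essentially the same approach as the paper: the paper likewise uses the $\beta$-smoothness descent inequality, conditions on the filtration to exploit unbiasedness and the second-moment bound, applies the $\mu$-PL inequality to obtain the $(1-2\mu\eta)$ contraction on $\mathcal{J}(\theta_k)-\mathcal{J}(\theta^*)$, sums the geometric series to get the $\frac{\beta\sigma^2\eta}{4\mu}$ noise floor, and finishes via the PL$\Rightarrow$quadratic-growth inequality $\mathcal{J}(\theta)-\mathcal{J}(\theta^*)\ge\frac{\mu}{2}\|\theta-\theta^*\|^2$ citing the same Karimi--Nutini--Schmidt reference you mention. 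Your remark about the $K$ versus $K-1$ exponent is exactly the right way to reconcile the unrolled bound with the stated one.
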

Based on the update from the SGD, and the definition of $\beta$-smooth, we have the following relationship
\begin{equation}
    \mathcal{J}(\theta_{k+1}) \leq \mathcal{J}(\theta_{k})-\eta_k\langle\nabla \mathcal{J}(\theta_{k}), \nabla \mathcal{J}_{I_{k+1}}(\theta_{k})\rangle + \frac{\beta\eta_k^2}{2}\|\nabla \mathcal{J}_{I_{k+1}}(\theta_{k})\|^2.
\end{equation}
Taking the expectation for both sides w.r.t. $I_{k+1}$ yields
\begin{equation}
\begin{split}
    \mathbb{E}[\mathcal{J}(\theta_{k+1})|\mathcal{F}_k]&\leq \mathcal{J}(\theta_{k})-\eta_k\langle\nabla \mathcal{J}(\theta_{k}), \mathbb{E}[\nabla \mathcal{J}_{I_{k+1}}(\theta_{k})|\mathcal{F}_k]\rangle+\frac{\beta\eta_k^2}{2}\mathbb{E}[\|\nabla \mathcal{J}_{I_{k+1}}(\theta_{k})\|^2|\mathcal{F}_k]\\&\leq \mathcal{J}(\theta_{k})-\eta_k\|\nabla \mathcal{J}(\theta_{k})\|^2+\frac{\beta\sigma^2\eta^2_k}{2}\\&\leq \mathcal{J}(\theta_{k})-2\mu\eta_k(\mathcal{J}(\theta_{k})-\mathcal{J}(\theta^*))+\frac{\beta\sigma^2\eta^2_k}{2},
\end{split}
\end{equation}
where the second inequality follows from the unbiased estimate of a stochastic gradient and the bounded second moment, the third inequality follows from the PL condition. Subtracting $\mathcal{J}(\theta^*))$ on both sides leads to the following inequality
\begin{equation}\label{eq_36}
    \mathbb{E}[\mathcal{J}(\theta_{k+1})-\mathcal{J}(\theta^*)|\mathcal{F}_k]\leq (1-2\mu\eta_k)(\mathcal{J}(\theta_{k})-\mathcal{J}(\theta^*))+\frac{\beta\sigma^2\eta^2_k}{2}.
\end{equation}
As $\eta_k=\eta<\frac{1}{2\mu}$, by applying the last inequality recursively, the following can be obtained
\begin{equation}
\begin{split}
    \mathbb{E}[\mathcal{J}(\theta_{k+1})-\mathcal{J}(\theta^*)|\mathcal{F}_k]&\leq(1-2\mu\eta)^k(\mathcal{J}(\theta_{0})-\mathcal{J}(\theta^*))+\frac{\beta\sigma^2\eta^2}{2}\sum_{j=0}^k(1-2\mu\eta)^j\\&\leq(1-2\mu\eta)^k(\mathcal{J}(\theta_{0})-\mathcal{J}(\theta^*))+\frac{\beta\sigma^2\eta^2}{2}\sum_{j=0}^\infty(1-2\mu\eta)^j\\&\leq (1-2\mu\eta)^k(\mathcal{J}(\theta_{0})-\mathcal{J}(\theta^*))+\frac{\beta\sigma^2\eta}{4\mu}.
\end{split}
\end{equation}
The last inequality follows the definition of step size and the property of the geometric series.
As the PL condition implies the \textit{quadratic growth} property~\citep{karimi2016linear}, which suggests \begin{equation}\label{eq_38}
    \mathcal{J}(\theta)-\mathcal{J}(\theta^*)\geq \frac{\mu}{2}\|\theta-\theta^*\|^2.
\end{equation}
Thus, combining the last two inequalities and replacing $k$ with $K$ completes the proof.

Lemma~\ref{lemma_5} shows that with a constant step size $\eta$, $\theta_k$ will converge linearly to the neighborhood of $\theta^*$, up to a constant w.r.t. $\eta$ and $\sigma^2$. This indicates the impact of the noisy stochastic gradient, while we can reduce it by selecting a smaller step size, which, however, would increase $1-2\mu\eta$, resulting in slower convergence.

It is ready to state the following main result for the generalization error in the stochastic setting with the PL condition.
\begin{theorem}\label{theorem_2}
Given all conditions and parameters defined in Lemma~\ref{lemma_5} and let Assumptions~\ref{assumption_1} and~\ref{assumption_2} hold. Then the generalization error $\mathbb{E}[\|e_G^K\|^2]$ satisfies the following relationship after $K$ epochs,
\begin{equation}
        \mathbb{E}[\|e^K_G\|^2]\leq 3(1+c^2)L^2N^2\hat{\mathcal{B}}^2\bigg((1-2\mu\eta)^{K-1}\frac{2[\mathcal{J}(\theta_0)-\mathcal{J}(\theta^*)]}{\mu}+\frac{\beta\sigma^2\eta}{2\mu^2}\bigg)+3Ch^{2\alpha}.
\end{equation}
\end{theorem}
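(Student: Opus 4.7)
The plan is to follow exactly the same template as the proof of Theorem~\ref{theorem_1}, combining the stochastic version of the generalization error decomposition with the iterate bound from Lemma~\ref{lemma_5} and the discretization bound from Lemma~\ref{lemma_4}. All three ingredients are already in place; the only work is to assemble them.

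First I would start from the conditional bound already derived in Eq.~\ref{eq_25},
\begin{equation*}
\mathbb{E}[\|e^K_G\|^2 \mid \mathcal{F}_{K-1}] \leq 3\big[(1+c^2)L^2N^2\hat{\mathcal{B}}^2\,\mathbb{E}[\|\theta_K-\theta^*\|^2 \mid \mathcal{F}_{K-1}] + \|u^h-u\|^2\big],
\end{equation*}
which is itself a direct consequence of \Eqnref{eq_15} after taking conditional expectation (its derivation relied only on Assumptions~\ref{assumption_1} and~\ref{assumption_2} and the uniform basis bound $\hat{\mathcal{B}}$, which are pathwise and thus survive conditioning). The discretization term is deterministic and already controlled by $Ch^{2\alpha}$ via Lemma~\ref{lemma_4}, so it is unaffected by any expectation.

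Next I would apply the tower property to remove the conditioning: $\mathbb{E}[\mathbb{E}[\,\cdot\,\mid \mathcal{F}_{K-1}]] = \mathbb{E}[\,\cdot\,]$. This leaves the single unknown $\mathbb{E}[\|\theta_K-\theta^*\|^2]$, which is exactly what Lemma~\ref{lemma_5} delivers: under the PL condition, bounded second moment $\sigma^2$, $\beta$-smoothness, and constant step size $\eta<\frac{1}{2\mu}$, the iterate error is bounded by $(1-2\mu\eta)^{K-1}\tfrac{2[\mathcal{J}(\theta_0)-\mathcal{J}(\theta^*)]}{\mu}+\tfrac{\beta\sigma^2\eta}{2\mu^2}$. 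Substituting this into the conditionally-expected inequality and folding in $\|u^h-u\|^2\leq Ch^{2\alpha}$ yields the stated bound verbatim.

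There is essentially no technical obstacle, since all the heavy lifting (the SGD recursion, the unbiasedness $\mathbb{E}[\nabla\mathcal{J}_{I_k}\mid\mathcal{F}_{k-1}]=\nabla\mathcal{J}$, the PL-based quadratic growth in \Eqnref{eq_38}, the geometric-series unrolling that produces the $\tfrac{\beta\sigma^2\eta}{2\mu^2}$ noise floor, and the FEM interpolation estimate) has already been discharged inside Lemmas~\ref{lemma_4} and~\ref{lemma_5}. The only point to check carefully is consistency of the probabilistic setup: Assumption~\ref{assumption_2} must be read as a pathwise Lipschitz bound on the maps $\theta\mapsto U_i(\theta)$, so that the inequality $\|u_{\theta_K}-u_{\theta^*}\|\leq L\sum_i\|\mathcal{B}_i(\mathbf{x})\|\|\theta_K-\theta^*\|$ passes through $\mathbb{E}[\cdot]$ without issue, and the modeling-to-optimization constant $c$ from Assumption~\ref{assumption_1} must be deterministic (which it is). The resulting bound cleanly decomposes into a transient linear-decay term (rate $1-2\mu\eta$), a constant noise floor scaled by $N^2\eta\sigma^2$, and a static discretization floor $3Ch^{2\alpha}$ --- mirroring Theorem~\ref{theorem_1} but with the pure linear convergence replaced by the standard constant-step SGD behavior to a neighborhood.
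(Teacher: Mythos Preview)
Your proposal is correct and follows essentially the same approach as the paper: the paper's own proof is a one-line combination of Eq.~\ref{eq_25}, Lemma~\ref{lemma_4}, and Lemma~\ref{lemma_5}, which is exactly what you outline (with the added care of explicitly invoking the tower property and checking that Assumptions~\ref{assumption_1} and~\ref{assumption_2} are pathwise so the bounds survive expectation).
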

\begin{proof}
Combining Eq.~\ref{eq_25}, Lemma~\ref{lemma_4} and Lemma~\ref{lemma_5} immediately yields the above result.
\end{proof}
We now analyze the error bound in this context. Inherently, the optimization error is directly impacted by the updates of weight parameters from Lemma~\ref{lemma_5} such that in an asymptotic manner, the generalization error converges to a constant involving both optimization error and discretization error eventually, i.e., \[\lim_{K\to\infty}\mathbb{E}[\|e_G^K\|^2]\leq 3(1+c^2)L^2N^2\hat{\mathcal{B}}^2\frac{\beta\sigma^2\eta}{2\mu^2}+3Ch^{2\alpha}.\] Compared to Theorem~\ref{theorem_1}, the asymptotic error is larger due to the stochastic gradient noise term $\frac{\beta\sigma^2\eta}{2\mu^2}$, which can be enlarged by the number of basis functions $N$. This may suggest a relative worse minimum, which has also been shown in previous works~\citep{gower2019sgd,bottou2018optimization}. As discussed above, one can leverage the step size to control its negative impact, while this also affects the convergence of the optimization procedure. The discretization error remains the same because of its static nature. Similarly, when we apply the same relationship between $N$ and $h$, the following corollary summarizes the result in a noisy environment.
\begin{corollary}\label{coro_2}
Suppose that $h=\mathcal{O}(\frac{1}{\sqrt{KN}})$. Given all conditions and parameters defined in Theorem~\ref{theorem_2}, the generalization error enjoys a sublinear convergence rate and converges to a neighborhood of the minimum up to a constant w.r.t. $N^2\sigma^2$,
\begin{equation}
    \mathbb{E}[\|e_G^K\|^2]\leq \mathcal{O}\bigg(N^2\rho_2^K+N^2\sigma^2+\frac{1}{(NK)^{\alpha}}\bigg),
\end{equation}
where $\rho_2 = (1-2\mu\eta)$.
\end{corollary}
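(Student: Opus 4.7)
The plan is to derive Corollary~\ref{coro_2} directly by substituting the prescribed mesh resolution $h=\mathcal{O}(1/\sqrt{KN})$ into the explicit bound of Theorem~\ref{theorem_2} and then grouping constants into the $\mathcal{O}(\cdot)$ notation. Since Theorem~\ref{theorem_2} already contains the exact three-term decomposition we want to produce, no new analytic machinery is required; this is essentially a bookkeeping exercise in how the three terms scale in $N$, $K$, $\sigma^2$, and $\alpha$.

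First I would rewrite Theorem~\ref{theorem_2}'s conclusion schematically as
\begin{equation*}
\mathbb{E}[\|e^K_G\|^2]\leq A_1 N^2 (1-2\mu\eta)^{K-1}+ A_2 N^2 \sigma^2 + 3C h^{2\alpha},
\end{equation*}
where $A_1:=3(1+c^2)L^2\hat{\mathcal{B}}^2\cdot\tfrac{2[\mathcal{J}(\theta_0)-\mathcal{J}(\theta^*)]}{\mu}$ and $A_2:=3(1+c^2)L^2\hat{\mathcal{B}}^2\cdot\tfrac{\beta\eta}{2\mu^2}$ are constants independent of $N$, $K$, and $\sigma$. This isolates the three error sources (optimization, stochastic-gradient noise, discretization) that the statement of Corollary~\ref{coro_2} claims to track.

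Next I would plug in the assumed scaling $h=\mathcal{O}(1/\sqrt{KN})$. Raising to the $2\alpha$ power gives $h^{2\alpha}=\mathcal{O}\bigl(1/(KN)^{\alpha}\bigr)$, which, together with the definition $\rho_2=1-2\mu\eta$, lets me absorb the finite prefactors $A_1,A_2,3C$ into the $\mathcal{O}(\cdot)$ symbol. A small care-point is that $(1-2\mu\eta)^{K-1}$ differs from $\rho_2^K$ only by the factor $\rho_2^{-1}$, which is a constant under the step-size restriction $\eta<1/(2\mu)$, so it is legitimately hidden in the big-$\mathcal{O}$. Collecting the three pieces yields exactly
\begin{equation*}
\mathbb{E}[\|e_G^K\|^2]\leq \mathcal{O}\!\left(N^2\rho_2^K + N^2\sigma^2 + \frac{1}{(NK)^{\alpha}}\right).
\end{equation*}

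The proof is therefore essentially a one-line substitution, and the main conceptual (rather than technical) obstacle is simply justifying that $h\propto 1/\sqrt{KN}$ is a legitimate choice --- this is exactly what Lemma~\ref{lemma_4} together with the $h\propto 1/\sqrt{N}$ dimensional reasoning cited from~\citep{larson2013finite} affords, so nothing new is needed. The only thing worth flagging in the write-up is the interpretation: the second term $N^2\sigma^2$ does not vanish as $K\to\infty$, unlike in the batch (Corollary~\ref{coro_1_main}) case, which is why this corollary only guarantees convergence to a \emph{neighborhood} of the optimum determined by the stochastic-gradient noise variance; this is the feature that distinguishes Corollary~\ref{coro_2} from Corollary~\ref{coro_1_main} and motivates the diminishing step-size analysis that will follow in Corollary~\ref{coro_3}.
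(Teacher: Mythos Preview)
Your proposal is correct and matches the paper's approach: the paper does not give a standalone proof of this corollary but treats it as an immediate consequence of Theorem~\ref{theorem_2} via the substitution $h=\mathcal{O}(1/\sqrt{KN})$, exactly as you describe. Your handling of the minor points (absorbing $\rho_2^{-1}$ into the constant using $\eta<1/(2\mu)$, and the interpretation of the non-vanishing $N^2\sigma^2$ term) is accurate and in fact more explicit than the paper's own discussion.
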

Similar to Corollary~\ref{coro_1}, an immediate observation from Corollary~\ref{coro_2} is that the error bound still involves the clear tradeoff between the optimization error and discretization error. However, a surprising difference is that the optimization error dominates the generalization error instead of discretization error in Corollary~\ref{coro_1} when $K$ is sufficiently large, as the constant term $N^2\sigma^2$ exists. Additionally, this result may imply a relatively higher overfitting than that in the batching setting due to larger optimization error. Apparently, the careful design of $N$ is able to alleviate the overfitting phenomenon.
To reduce the uncertainties brought by the noisy gradients, a decay step size needs to be adopted, instead of a constant one, while this results in a worse sublinear convergence rate for the optimization error itself. Hence, we select a decaying step size for the eliminate the effect of the noisy gradient in the optimization error.

Let the step size $\eta_k = \frac{2k+1}{2\mu(k+1)^2}$. The following lemma describes how the weight parameters evolve accordingly.
\begin{lemma}\label{lemma_6}
Let $(\mathcal{F}_k)_k$ be an increasing family of $\sigma$-fields. For each $k\geq 0$, suppose that $\mathcal{J}$ is a $\beta$-smooth, differentiable, and $\mathcal{F}_k$-measurable function over $\Theta$. Also, assume that $\mathbb{E}[\nabla\mathcal{J}_{I_k}(\theta_k)|\mathcal{F}_{k-1}]=\nabla \mathcal{J}(\theta_k)$, for each $\theta\in\Theta$ and $k\geq 1$, where $\mathcal{J}$ satisfies the $\mu$-PL condition. If $\forall k\geq 0$, $\mathbb{E}[\|\nabla \mathcal{J}_{I_k}(\theta_k)\|^2|\mathcal{F}_{k-1}]\leq \sigma^2$, then the iterates $\{\theta_k\}$ satisfy the following relationship after $K$ epochs,
\begin{equation}\label{eq_30}
    \mathbb{E}[\|\theta_K-\theta^*\|^2]\leq \frac{\beta\sigma^2}{K\mu^3},
\end{equation}
when the step size $\eta_k=\frac{2k+1}{2\mu(k+1)^2}$.
\end{lemma}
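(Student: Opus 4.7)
The plan is to build on the one-step recursion already established in the proof of Lemma~\ref{lemma_5} and then exploit a clever telescoping induced by the specific choice of decaying step size. The starting point is Eq.~\ref{eq_36}, which holds regardless of step-size schedule:
\begin{equation*}
\mathbb{E}[\mathcal{J}(\theta_{k+1})-\mathcal{J}(\theta^*)|\mathcal{F}_k]\leq (1-2\mu\eta_k)(\mathcal{J}(\theta_{k})-\mathcal{J}(\theta^*))+\tfrac{\beta\sigma^2\eta_k^2}{2}.
\end{equation*}
After taking a full (outer) expectation and denoting $\delta_k=\mathbb{E}[\mathcal{J}(\theta_k)-\mathcal{J}(\theta^*)]$, this gives a scalar recursion $\delta_{k+1}\leq (1-2\mu\eta_k)\delta_k+\tfrac{\beta\sigma^2\eta_k^2}{2}$.

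Next I would substitute $\eta_k=\tfrac{2k+1}{2\mu(k+1)^2}$ and simplify the two coefficients. A direct computation yields
\begin{equation*}
1-2\mu\eta_k = 1-\frac{2k+1}{(k+1)^2}=\frac{k^2}{(k+1)^2},
\end{equation*}
and, using $(2k+1)^2\leq (2k+2)^2$, we obtain $\eta_k^2\leq \frac{1}{\mu^2(k+1)^2}$. Plugging these into the scalar recursion produces
\begin{equation*}
\delta_{k+1}\leq \frac{k^2}{(k+1)^2}\,\delta_k+\frac{\beta\sigma^2}{2\mu^2(k+1)^2}.
\end{equation*}

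The crucial step is now to multiply both sides by $(k+1)^2$, which is precisely what the chosen step size is engineered for, yielding $(k+1)^2\delta_{k+1}\leq k^2\delta_k+\tfrac{\beta\sigma^2}{2\mu^2}$. A straightforward induction over $k=0,\dots,K-1$ telescopes the left-hand side and accumulates $K$ copies of the constant, giving $K^2\delta_K\leq \tfrac{K\beta\sigma^2}{2\mu^2}$, i.e., $\delta_K\leq \tfrac{\beta\sigma^2}{2K\mu^2}$. Finally, since the PL condition implies quadratic growth (Eq.~\ref{eq_38}), we have $\|\theta_K-\theta^*\|^2\leq \tfrac{2}{\mu}(\mathcal{J}(\theta_K)-\mathcal{J}(\theta^*))$, and taking expectations yields $\mathbb{E}[\|\theta_K-\theta^*\|^2]\leq \tfrac{2}{\mu}\delta_K\leq \tfrac{\beta\sigma^2}{K\mu^3}$, which is exactly the claim.

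The main obstacle here is conceptual rather than technical: recognizing that the seemingly arbitrary step-size schedule $\eta_k=\tfrac{2k+1}{2\mu(k+1)^2}$ is tailored so that the contraction factor $1-2\mu\eta_k$ is exactly $(k/(k+1))^2$, which makes the weighted sequence $(k+1)^2\delta_k$ satisfy a telescoping inequality. Once that observation is made, the rest of the argument is a routine induction plus an appeal to the quadratic-growth consequence of the PL condition that was already invoked in Lemma~\ref{lemma_5}. No new assumptions beyond those in the statement are needed.
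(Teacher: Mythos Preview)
Your proof is correct and follows essentially the same route as the paper: start from the one-step recursion Eq.~\ref{eq_36}, substitute the chosen $\eta_k$ so that $1-2\mu\eta_k=k^2/(k+1)^2$, multiply by $(k+1)^2$ to obtain a telescoping inequality for $k^2\delta_k$ (the paper writes $\zeta_k:=k^2\mathbb{E}[\mathcal{J}(\theta_k)-\mathcal{J}(\theta^*)]$), sum, and finish with the quadratic-growth consequence of the PL condition. The only cosmetic difference is that the paper keeps the exact noise term $\tfrac{\beta\sigma^2(2k+1)^2}{8\mu^2(k+1)^4}$ and bounds $\tfrac{2k+1}{k+1}<2$ after multiplying, whereas you bound $\eta_k^2$ first; the resulting estimates are identical.
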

\begin{proof}
Recalling Eq.~\ref{eq_36} and applying the step size $\eta_k=\frac{2k+1}{2\mu(k+1)^2}$ to it results in
\begin{equation}
    \mathbb{E}[\mathcal{J}(\theta_{k+1})-\mathcal{J}(\theta^*)|\mathcal{F}_k]\leq\frac{k^2}{(k+1)^2}(\mathcal{J}(\theta_{k})-\mathcal{J}(\theta^*))+\frac{\beta\sigma^2(2k+1)^2}{8\mu^2(k+1)^4}.
\end{equation}
We now multiply both sides by $(k+1)^2$ and denote $\zeta_k:=k^2\mathbb{E}[\mathcal{J}(\theta_{k})-\mathcal{J}(\theta^*)]$ such that
\begin{equation}
    \zeta_{k+1}\leq \zeta_k + \frac{\beta\sigma^2(2k+1)^2}{8\mu^2(k+1)^4}\leq \zeta_k +\frac{\beta\sigma^2}{2\mu^2},
\end{equation}
where the second inequality uses the fact that $\frac{2k+1}{k+1}<2$. Summing up the last inequality from 0 to $k$ and using the fact that $\zeta_0=0$ we can obtain the following relationship
\begin{equation}
    \zeta_{k+1}\leq\zeta_0+(k+1)\frac{\beta\sigma^2}{2\mu^2}= (k+1)\frac{\beta\sigma^2}{2\mu^2}.
\end{equation}
Applying the last inequality to $K$ and adopting the quadratic growth property in Eq.~\ref{eq_38} completes the proof.
\end{proof}

When $K\to\infty$, a diminishing step size will eliminate the negative impact of $\sigma^2$ to enable the convergence to $\theta^*$, while at cost of the convergence rate. If defining a condition number as $\frac{\beta}{\mu}$, we can observe that a smaller condition number enables a better solution given a finite $K$ in practice. With this in hand, the generalization error bound is obtained as follows.
\begin{theorem}\label{theorem_3}
Given all conditions and parameters defined in Lemma~\ref{lemma_6} and let Assumptions~\ref{assumption_1} and~\ref{assumption_2} hold. Then, the generalization error $\mathbb{E}[\|e^K_G\|^2]$ satisfies the following relationship after $K$ epochs,
\begin{equation}
    \mathbb{E}[\|e^K_G\|^2]\leq 3(1+c^2)L^2N^2\hat{\mathcal{B}}^2\frac{\beta\sigma^2}{K\mu^3}+3Ch^{2\alpha}.
\end{equation}
\end{theorem}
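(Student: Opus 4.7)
The plan is to show that Theorem~\ref{theorem_3} follows as a direct composition of three already-established pieces: the generic decomposition of the generalization error in the stochastic setting (Eq.~\ref{eq_25}), the discretization bound from Lemma~\ref{lemma_4}, and the weight-iterate bound under a diminishing step size from Lemma~\ref{lemma_6}. In other words, there is essentially no new analysis to perform; the contribution of this theorem is to swap out the constant step size used in Theorem~\ref{theorem_2} for the schedule $\eta_k=\frac{2k+1}{2\mu(k+1)^2}$ and to propagate the resulting $1/K$ rate through the parameterization-to-field Lipschitz argument that was set up in the body of the error analysis.

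First, I would start from Eq.~\ref{eq_25}, which under Assumptions~\ref{assumption_1} and~\ref{assumption_2} already gives
\[
\mathbb{E}[\|e^k_G\|^2\mid\mathcal{F}_{k-1}]\leq 3(1+c^2)L^2N^2\hat{\mathcal{B}}^2\,\mathbb{E}[\|\theta_k-\theta^*\|^2\mid\mathcal{F}_{k-1}]+3\|u^h-u\|^2.
\]
Taking total expectation (using the tower property) removes the conditioning and preserves the inequality. At this point the bound has split into an \emph{optimization/modeling} piece governed by $\mathbb{E}[\|\theta_K-\theta^*\|^2]$ and a \emph{discretization} piece governed by $\|u^h-u\|^2$, and the two can be handled independently.

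Second, I would substitute the two bounds. For the discretization piece, Lemma~\ref{lemma_4} gives $\|u^h-u\|^2\leq Ch^{2\alpha}$ directly, contributing the term $3Ch^{2\alpha}$. For the weight piece, Lemma~\ref{lemma_6} (applied at iteration $K$ with the prescribed diminishing step size, and under the same $\beta$-smoothness, $\mu$-PL, unbiased-gradient and bounded-second-moment hypotheses that Theorem~\ref{theorem_3} inherits) yields
\[
\mathbb{E}[\|\theta_K-\theta^*\|^2]\leq \frac{\beta\sigma^2}{K\mu^3}.
\]
Plugging this into the first display produces exactly
\[
\mathbb{E}[\|e^K_G\|^2]\leq 3(1+c^2)L^2N^2\hat{\mathcal{B}}^2\frac{\beta\sigma^2}{K\mu^3}+3Ch^{2\alpha},
\]
which is the claim.

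The main ``obstacle'' is really a bookkeeping one rather than a technical one: I need to verify that the hypotheses of Lemma~\ref{lemma_6} are indeed all in force under the umbrella ``Given all conditions and parameters defined in Lemma~\ref{lemma_6}'' (namely $\beta$-smoothness, $\mathcal{F}_k$-measurability, unbiasedness of $\nabla\mathcal{J}_{I_k}$, bounded stochastic-gradient second moment $\sigma^2$, and the $\mu$-PL condition on $\mathcal{J}$), and that Assumptions~\ref{assumption_1}--\ref{assumption_2} are what permit the passage from $\theta$-space to function-space error through the $L$--$N$--$\hat{\mathcal{B}}$ constants in Eq.~\ref{eq_25}. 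Once these hypotheses are matched, the proof is a one-line combination and no further inequalities are needed. As a sanity check one should also record the asymptotic behavior $\lim_{K\to\infty}\mathbb{E}[\|e^K_G\|^2]\leq 3Ch^{2\alpha}$, which, in contrast to Theorem~\ref{theorem_2}, now holds without an $\mathcal{O}(\sigma^2)$ floor because the diminishing step size kills the stochastic-gradient noise; this observation is what justifies the improved $\mathcal{O}(N^2/K + 1/(NK)^\alpha)$ rate reported subsequently in Corollary~\ref{coro_3}.
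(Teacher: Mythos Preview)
Your proposal is correct and matches the paper's own proof, which simply states that the result follows from combining Eq.~\ref{eq_25}, Lemma~\ref{lemma_4}, and Lemma~\ref{lemma_6}. Your additional remarks about taking total expectation via the tower property and about the vanishing $\sigma^2$-floor are accurate elaborations, but no different argument is involved.
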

\begin{proof}
The proof follows from the combination among Eq.~\ref{eq_25}, Lemma~\ref{lemma_4} and Lemma~\ref{lemma_6}.
\end{proof}
Theorem~\ref{theorem_3} renders a similar asymptotic behavior as seen in Theorem~\ref{theorem_1}, that said, the generalization error will end up with the discretization error when $K\to\infty$, but with a slower convergence rate $\mathcal{O}(\frac{1}{K})$, regardless of the noise brought by stochastic gradients. Now the following corollary summarizes the convergence rate given the explicit relationship between $h$ and $N$. 
\begin{corollary}\label{coro_3}
Suppose that $h=\mathcal{O}(\frac{1}{\sqrt{KN}})$. Given all conditions and parameters defined in Theorem~\ref{theorem_3}, the generalization error enjoys a sublinear convergence rate as follows
\begin{equation}
    \mathbb{E}[\|e^G_K\|^2]\leq\mathcal{O}\bigg(\frac{N^2}{K}+\frac{1}{(NK)^{\alpha}}\bigg).
\end{equation}
\end{corollary}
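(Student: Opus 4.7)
The plan is to treat Corollary~\ref{coro_3} as an immediate consequence of Theorem~\ref{theorem_3} together with the prescribed scaling relationship $h=\mathcal{O}(1/\sqrt{KN})$ between mesh resolution, epoch count, and basis dimension. Concretely, I would proceed in three short steps: (i) invoke Theorem~\ref{theorem_3} to obtain the decomposition of $\mathbb{E}[\|e^K_G\|^2]$ into an optimization/modeling term scaling like $N^2/K$ and a static discretization term scaling like $h^{2\alpha}$; (ii) substitute the prescribed $h=\mathcal{O}(1/\sqrt{KN})$ into the discretization term; and (iii) absorb the problem-dependent constants ($c,L,\hat{\mathcal{B}},\beta,\mu,\sigma,C$) into the asymptotic notation.

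For step (i), Theorem~\ref{theorem_3} directly yields
\begin{equation*}
\mathbb{E}[\|e^K_G\|^2]\leq 3(1+c^2)L^2N^2\hat{\mathcal{B}}^2\,\frac{\beta\sigma^2}{K\mu^3}+3Ch^{2\alpha}.
\end{equation*}
All of $(1+c^2)$, $L^2$, $\hat{\mathcal{B}}^2$, $\beta$, $\sigma^2$, $\mu^{-3}$, and $C$ are constants that are independent of both $N$ and $K$ (by Assumptions~\ref{assumption_1} and~\ref{assumption_2} and the ambient PDE/mesh setting), so the first summand can be collapsed to $\mathcal{O}(N^2/K)$.

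For step (ii), substituting $h=\mathcal{O}(1/\sqrt{KN})$ gives $h^{2\alpha}=\mathcal{O}((KN)^{-\alpha})=\mathcal{O}(1/(NK)^{\alpha})$, which contributes the second summand. Combining steps (i) and (ii) produces the stated sublinear bound $\mathbb{E}[\|e^K_G\|^2]\leq \mathcal{O}(N^2/K + 1/(NK)^{\alpha})$.

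I do not anticipate any real technical obstacle here: the hard work is already baked into Theorem~\ref{theorem_3} (via Lemma~\ref{lemma_6} for the $1/K$ behavior of $\mathbb{E}[\|\theta_K-\theta^*\|^2]$ under the diminishing step size $\eta_k = (2k+1)/(2\mu(k+1)^2)$) and Lemma~\ref{lemma_4} (for the $h^{2\alpha}$ discretization bound). The only mildly delicate point to flag explicitly is the justification for the scaling $h=\mathcal{O}(1/\sqrt{KN})$: it refines the standard 2D FEM relationship $h\propto 1/\sqrt{N}$ by an additional factor of $1/\sqrt{K}$, so that as the optimizer runs longer one simultaneously refines the mesh, keeping the two error sources comparable. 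Once this scaling is accepted (as in Corollary~\ref{coro_1_main} and Corollary~\ref{coro_2}), the corollary follows by plugging in and simplifying.
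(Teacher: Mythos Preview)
Your proposal is correct and mirrors the paper's own treatment: Corollary~\ref{coro_3} is obtained exactly by substituting $h=\mathcal{O}(1/\sqrt{KN})$ into the bound of Theorem~\ref{theorem_3} and absorbing the constants, just as Corollary~\ref{coro_1} was obtained from Theorem~\ref{theorem_1}. The paper provides no separate argument beyond this substitution, so there is nothing to add.
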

Corollary~\ref{coro_3} still enjoys the tradeoff between the optimization error and discretization error, while the rate is determined primarily by the former, given a sufficiently large $K$. This observation is different from what we have found in Corollary~\ref{coro_1}, where no noise in gradient needs to be taken into account. Hence, dealing with noisy gradients motivates us to leverage different step sizes in various scenarios. Additionally, when $\alpha=1$, the convergence rate is $\mathcal{O}(1/K)$.
\begin{remark}
To summarize, through the investigations on the generalization error by using the optimization theory, we have clearly found that there exists tradeoff between optimization error and discretization error and that depending on different scenarios, either of them determines the primary convergence rate, after a sufficiently large number of epochs and given a reasonable explicit relationship among $h, N$ and $K$. This initially fills in the gap between theory and practice in the parametric PDE domain. Though the properties of the loss in this work may not necessarily be practical in the real-world problem, our analysis delivers some useful and meaningful theoretical insights to the community and points out a research direction that other researchers can work on. Perhaps the more generic case for the loss is non-convex without the PL condition, while we leave this as one of our future work as the accuracy metric for the weights could be significantly from what we have used here, i.e., $\|\theta_k-\theta^*\|^2$, and it can definitely affect the definition of the generalization error.
\end{remark}





\section{Additional Implementation Details}

\subsection*{Network Architecture}
The neural network architecture $ G $ mainly comprises 2D and 3D transpose convolution operations. The dimensionality of the transpose convolution operation will match the dimensionality of the domain we are solving for. We use transpose convolutions as a trainable up-sampling scheme. This is favored compared to deterministic methods such as interpolation-based upsampling methods or unpooling operations due to the added learning capacity transpose convolutions offer. The kernel size of the transpose convolution operation is initialized to encapsulate information from a neighborhood of NURBS control points. 

\subsection*{Application of Boundary Conditions}

We can accurately apply the boundary conditions to arbitrary complex domains. We accomplish this by learning the in-out function associated with a given boundary. We optimize our network to learn the in-out function using a differentiable winding numbers loss function outlined in detail below. The optimized in-out function approximation sets the elements inside the boundary to 1 and outside the boundary to 0. The inside of the object also inherits the conditions at its boundary; therefore, the Dirichlet condition is applied to all the points in the background mesh assigned a positive winding number, along with the points on the object surface. The nodes assigned zero values are understood to belong to the computational domain, thus being used for further computations. This is accomplished with a \texttt{torch.where()} function. We then apply this processed in-out prediction as the boundary condition to the predicted field solution using a similar \texttt{torch.where()} function. 

\subsection*{Differentiation and Integration:}

The computation of the loss function $ \mathcal{J} $  in \Eqnref{eq:combined-loss-winding-num} requires an integration of the term $ v^h[\mathcal{N}(w^h)-f] $ over $ \Omega_B $. And the evaluation of $ \mathcal{N}(w^h) $ in turn requires some differentiation (recall that $ \mathcal{N}(\cdot) $ is a differential operator; also compare \Eqnref{eq:poisson-intro} and \Eqnref{eq:ns-intro}). But, this differentiation is not to be confused with the differentiation through the neural network. Rather, this derivative calculation is done using the basis functions $ \{\mathcal{B}_i(\mvec{x})\}_{i=1}^{N} $ and therefore the neural network is not responsible this.

Finally, the integration is performed numerically using Gaussian quadratures. For a given mesh $ \mesh $, the basis functions $ \{\mathcal{B}_i\} $ as well as the quadrature points are known and are completely deterministic. This allows us to define the spatial gradients of the predicted field solution by simply evaluating each element with the first or second-order derivative of the basis function originally used to evaluate the predicted field solution. Since we use the FEM, we perform the integration in each discrete element and then perform a summation over the finite set of elements to obtain the total integral.

\subsection*{Dataset Generation:}

In this work we used a dataset consisting of point clouds, and the corresponding normals, which outline Non-Uniform Rational B-Splines (NURBS) curves to represent the boundaries of irregular domains. 
NURBS curves are presented by a set of control points, with each control point being described by cartesian coordinates, in this case only (\emph{x},\emph{y}). 
For this dataset, we selected points along the x-axis which were uniformly spaced from 0 to 1. The corresponding coordinates along the y-axis were randomly sampled from a uniform distribution with a minimum value of 0.2 and a maximum value of 0.8. To attain the point cloud on the boundary defined by the NURBS curve we utilized NURBS-Python\citep{bingol2019geomdl}, a geometric modeling library. NURBS-Python not only provides a point cloud on the boundary of the NURBS curve, but also the normals, unique vectors for each point in the point cloud pointing in the orthogonal direction with respect to the boundary. Additionally, the area for each point is required for the calculation of the winding number. In this work we assume each point has a uniform area, which we maintain for each irregular boundary in the entire dataset. 

\subsection*{Differentiable Winding Number Computation:}

As explained in the main paper, winding number for a given point cloud $\mathcal{P}$ at a given query point $\mathbf{q}$ can be computed as 
\begin{equation}
\chi_{w}(\mathbf{q}) = \sum_{i=1} ^m a_i \frac{(\mathbf{p}_i - \mathbf{q}) \cdot \hat{\mathbf{n}}_i}{4\pi || \mathbf{p}_i - \mathbf{q} ||^3}
\end{equation}
In order to evaluate the winding number at all the nodal locations, we perform all the pairwise distance computations between every nodal location and every point in the pointcloud and perform a sum reduction of the pairwise distances as per the above equation. Using \texttt{pytorch}, we can achieve this with a simple broadcasting operation. This way, all the operations are performed using the GPUs very quickly.

During experimentation, we demonstrated two different methods of generating masks to impose the boundary conditions on our predicted field solution during the Finite Element loss computation. The first was with a differentiable winding number method. The second leverages the Finite Element framework to solve the Eikonal equation. The field solution to the Eikonal equation is a signed distance field. Once a signed distance field is obtained for the irregular boundary, we use a differentiable conditional function to create an accurate binary mask from the signed distance field.

\subsection*{Poisson's Equation with $\chi_w$-based Geometry Loss:}
The canonical Poisson's equation was introduced in \Eqref{eq:poisson-intro}. In this section we apply IBN to solve the Poisson's equation with $ f = 0 $ and the boundary conditions defined as:
$ u = 1, \ \ \text{on} \ \ \Gamma_o $ and $ u = 0, \ \ \text{on} \ \ \Gamma_B.$
This case mimics a steady heat transfer problem where $ \Omega_o $ represents a heat source (of infinite capacity) and the outer boundary $ \Gamma_B $ represents a sink (of infinite capacity). The solution to \Eqnref{eq:poisson-intro} along with the above $ f $ and the boundary conditions is the temperature field under those conditions. This case essentially refers to a large family of different problems where $ \Gamma_B $ can be any closed surface such that $ \Omega_o\subset\Omega_B $. As discussed in \Secref{sec:methodology}, we consider the cases where such source object shapes are available as point cloud data, and our goal is to devise a fast neural method to use the point cloud data directly in analysis without having to go through a mesh generation process. \Figref{fig:imdiff-poisson-winding-test1} shows the results of the winding number and the corresponding solution to the PDE. 

\begin{figure}[t!]
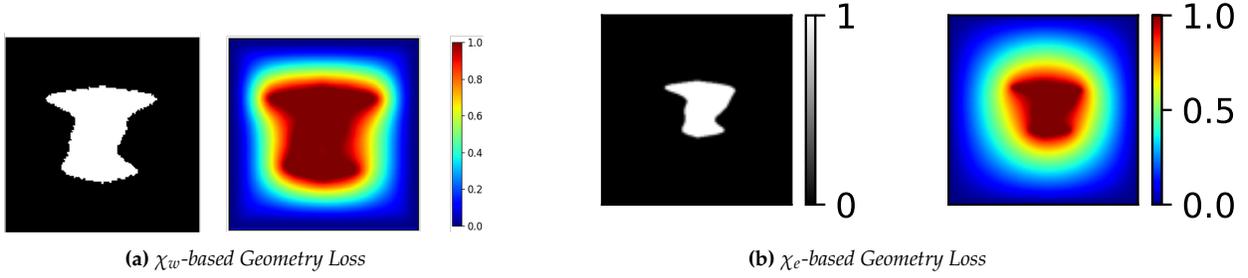

	\centering
	\begin{subfigure}[b]{0.39\linewidth}
    \includegraphics[trim=0 0 0 0,clip,width=0.99\linewidth]{Figures/winding-based-calc/grayWN_poisson_si.png}
	\caption{$\chi_w$-based Geometry Loss}
	\label{fig:imdiff-poisson-winding-test1}
	\end{subfigure}
	\begin{subfigure}[b]{0.60\linewidth}
	\centering
	\includegraphics[trim={0.2in 0.0in 0.2in 0.0in},clip,width=0.99\linewidth]{Figures/eikonal-based-calc/eikonal-poisson-1.pdf}
	\caption{$\chi_e$-based Geometry Loss}
	\label{fig:imdiff-poisson-winding-test2}
	\end{subfigure}
	\caption{(a): Left: The in-out information is obtained by the winding number calculation. Right:~Poisson's equation is solved with an immersed object represented using a 2D point cloud. (b)~The in-out information is obtained by solving the Eikonal equation in addition to the solution to the Poisson's equation using the IBN framework.}
	\vspace{-0.1in}
\end{figure}

\subsection*{Poisson's Equation with $\chi_e$-based Geometry Loss:}
In this case, along with the Poisson's Equation, we solve Eikonal equation $(1+\tau)\|\grad \phi\| - \tau (\nabla \phi) = 1$, $\tau \in [0,0.5]$ with $\phi=0$ at $\Gamma_o$. For this case, we simultaneously solve for both the PDEs and show the results in \Figref{fig:imdiff-poisson-winding-test2}.

\end{document}